\theoremstyle{plain}
\newtheorem{theorem}{Theorem}[section]
\newtheorem{proposition}[theorem]{Proposition}
\newtheorem{lemma}[theorem]{Lemma}
\theoremstyle{definition}
\newtheorem{definition}[theorem]{Definition}
\theoremstyle{remark}
\newtheorem{remark}[theorem]{Remark}
\newcommand{\R}{\mathbb{R}}
\newcommand{\E}{\mathbb{E}}
\newcommand{\T}{\textsc{T}}
\newcommand{\pinv}{+}
\renewcommand{\vec}[1]{\mathbf{\boldsymbol{#1}}}
\newcommand{\mat}[1]{\mathbf{#1}}
\newcommand{\rank}{\operatorname{rank}}
\newcommand{\diag}{\operatorname{diag}}
\newcommand{\trace}{\operatorname{Tr}}
\newcommand{\metric}[3]{\left\langle #1,#2 \right\rangle_{#3}}
\newcommand{\Xnorm}[2]{\left\| #1 \right\|_{#2}}
\newcommand{\Tnorm}[1]{\Xnorm{#1}{2}}
\newcommand{\XnormS}[2]{\left\| #1 \right\|^2_{#2}}
\newcommand{\TnormS}[1]{\XnormS{#1}{2}}
\newcommand{\at}[2][]{#1\Big|_{#2}}  
\newcommand{\netN}{\mathcal{N}}
\newcommand{\calN}{\mathcal{N}}
\newcommand{\calX}{\mathcal{X}}
\newcommand{\calY}{\mathcal{Y}}
\newcommand{\calP}{\mathcal{P}}
\global\long\def\psd#1{\mathbb{S}_{#1}^{+}}%
\global\long\def\sym#1{\mathbb{S}_{#1}}%
\newcommand{\ve}{\vec{e}}
\newcommand{\vr}{\vec{r}}
\newcommand{\vw}{\vec{w}}
\newcommand{\vx}{\vec{x}}
\newcommand{\vy}{\vec{y}}
\newcommand{\vz}{\vec{z}}
\newcommand{\vtheta}{\vec{\theta}}
\newcommand{\veta}{\vec{\eta}}
\newcommand{\vxi}{\vec{\xi}}
\newcommand{\vrho}{\vec{\rho}}
\newcommand{\matA}{\mat{A}}
\newcommand{\matB}{\mat{B}}
\newcommand{\matC}{\mat{C}}
\newcommand{\matD}{\mat{D}}
\newcommand{\matF}{\mat{F}}
\newcommand{\matH}{\mat{H}}
\newcommand{\matP}{\mat{P}}
\newcommand{\matS}{\mat{S}}
\newcommand{\matI}{\mat{I}}
\newcommand{\matJ}{\mat{J}}
\newcommand{\matV}{\mat{V}}
\newcommand{\matX}{\mat{X}}
\newcommand{\matY}{\mat{Y}}
\newcommand{\matSigma}{\mat{\Sigma}}
\newcommand{\matLambda}{\mat{\Lambda}}
\newcommand{\matTheta}{\mat{\Theta}}
\title{Flatness After All?}
\author[ ]{Neta Shoham}
\author[ ]{Liron Mor-Yosef}
\author[ ]{Haim Avron}
\affil[ ]{School of Mathematical Sciences, Tel Aviv University, Tel Aviv, Israel}
\affil[ ]{\texttt{shohamne@mail.tau.ac.il}, \texttt{lm2@mail.tau.ac.il}, \texttt{haimav@tauex.tau.ac.il}}
\begin{document}
\maketitle

\begin{abstract}
Recent literature generalization in deep learning has examined the relationship between the curvature of the loss function at minima and generalization, mainly in the context of overparameterized neural networks. A key observation is that ``flat'' minima tend to generalize better than ``sharp'' minima. While this idea is supported by empirical evidence, it has also been shown that deep networks can generalize even with arbitrary sharpness, as measured by either the trace or the spectral norm of the Hessian. In this paper, we argue that generalization could be assessed by measuring flatness using a soft rank measure of the Hessian. We show that when  
\textcolor{black}{an exponential family neural network} model is exactly calibrated\textcolor{black}{, and its prediction error and its confidence on the prediction are not correlated with the first and the second derivative of the network's output}, our measure accurately captures the asymptotic expected generalization gap. For non-calibrated models, we connect a soft rank based flatness measure to the well-known Takeuchi Information Criterion and show that it still provides reliable estimates of generalization gaps for models that are not overly confident. Experimental results indicate that our approach offers a robust estimate of the generalization gap compared to baselines.
\end{abstract}

\section{Introduction}
\label{sec:introduction}

Recent literature has explored the relationship between the curvature of the loss function at minima and generalization, especially in the context of overparametrized networks. A recurring theme is that ``flat'' minima generalize better than ``sharp'' minima. In the literature, flatness of the minima is typically assessed via a spectral function of the Hessian  \citep{andriushchenko2023modern,ding2024flat,gatmiry2024inductive,liu2023same, petzka2021relative}. Initially, the spectral norm of the Hessian was considered a good gauge for generalization, but \citet{andriushchenko2023modern} have shown experimentally that in some cases the maximum eigenvalue and the generalization gap are negatively correlated. In contemporary literature, the leading metric is the trace of the Hessian at the minima. 

There are several recent works that attempt to justify measuring the generalization gap using the trace of the Hessian.  \citet{ding2024flat, gatmiry2024inductive}  explore the topic in the context of low-rank matrix factorizations. \citet{liu2023same} demonstrated experimentally that there is a strong correlation between the trace of the Hessian of the loss during pre-training and the gap of the model on downstream tasks. They further prove a theoretical result for a special case of synthetic upstream and downstream tasks.

The line of research connecting the generalization gap to flatness, along with the empirical success in deep learning of using flatness measures to predict generalization, is somewhat surprising if we consider that it has been shown that deep networks can generalize well with arbitrary flatness (i.e., also with sharp minima), when flatness is measured by the trace of the Hessian or the spectral norm of the Hessian \citep{dinh2017sharp}. Furthermore, pursuing models with small Hessian trace stands in contrast to the well-known A-optimal design criterion, which seeks designs that minimize the trace of the inverse of the Hessian (i.e., sharp minima). There is also a vast literature that promotes whitening of representations in deep learning \citep{cogswell2015reducing, hua2021feature, lee2023importance}, which aligns more with sharpening than flattening.
Contrastive learning aims to produce orthogonal representations for samples that are dissimilar, and so is another form of whitening, and thus sharpening.

How can we reconcile the discrepancy between results that advocate the use of flatness as a measure of generalization and results that cast doubt on this notion? Should we abandon the use of flatness as an explanatory factor for the generalization gap? The main premise of this paper is that {\em no, we should not}, although we do need to adjust how flatness is measured.  We argue that to assess generalization, we should measure flatness via the following {\em soft rank} of the Hessian $\matH(\vtheta^\star)$ at a local minimum\footnote{This quantity is also known in the literature as {\em statistical dimension} and {\em effective number of dimensions}. We chose to use the term {\em soft rank} since the connection of this quantity to matrix rank is more central to our arguments.}:
\begin{equation*}
	{\rank}_\lambda(\matH(\vtheta^\star)) \coloneqq \trace(\matH(\vtheta^\star)(\matH(\vtheta^\star)+\lambda \matI)^{-1})
\end{equation*}
where $\lambda$ is the weight decay coefficient. ${\rank}_\lambda(\cdot)$ is a monotone increasing function of positive semidefinite matrices and, as such, deserves to be viewed as a flatness measure. 
 
For an exponential family neural network model (see Section~\ref{subsec:exp-family} for a definition) with prediction error and confidence that are not correlated with the network output's first and second derivative, we show that if there exists a weight decay coefficient $\lambda$ in which at a local minima $\vtheta^\star$ the model is {\em calibrated}\footnote{Informally, a model is calibrated if, given its predicted label distribution, the true labels follow the same distribution as the prediction.
}, then at that $\vtheta^\star$ we have that $\rank_\lambda(\matH(\vtheta^\star))$ on the theoretical population distribution is {\em exactly} the  asymptotic expected generalization gap.  One might argue that even if such a $\lambda$ exists, finding it is not practical. Further, even if we do find it, why would we believe that this specific calibrated model is the best model? Even though calibration is a desired property, much like unbiasedness it is not necessarily a trait of the optimal model. Obviously, these are valid concerns. The point is, {\em if} such a $\lambda$ exists, the soft rank is the correct measure of flatness, and other quantities are {\em incorrect}. Thus, it makes sense to analyze the use of soft rank as a flatness measure also for other cases. 

Indeed, in real-world training of deep networks, especially with limited data, one often finds minima in a region where the model is not only misspecified, but also uncalibrated. In such cases, curvature is not the only factor that determines the generalization gap.  Several recent works \citep{thomas2020interplay, naganuma2022takeuchis, jang2022reparametrization}  leveraged the classical \emph{Takeuchi Information Criterion} (TIC) \citep{takeuchi1976distribution}, which shows that asymptotically, the generalization gap equals $n^{-1}\cdot\trace(\matC(\vtheta^\star)\matH(\vtheta^\star)^{-1})$, where $n$ is the number of training samples and $\matC(\vtheta^\star)$ is the gradient covariance at $\vtheta^\star$. \citet{thomas2020interplay} demonstrated empirically that in practice $\matC(\vtheta^\star) \approx \alpha \matH(\vtheta^\star)$ for some $\alpha>0$, and thus claimed that the generalization gap can be estimated as $n^{-1}\cdot \nicefrac{\trace(\matC(\vtheta^\star))}{\trace(\matH(\vtheta^\star))}$. However, two crucial points were missed by previous literature, which we remedy:
\begin{itemize}
    \item TIC only works with the negative log-likelihood loss. More general versions of TIC allow general loss functions  \citep{Shibata1989, murata1994network, liu1995unbiased}, but with these the situation becomes more complex. Based on \citet{thomas2020interplay}'s proportionality observation, we argue that the generalization gap is now the product of two components: $\nicefrac{\trace(\matC(\vtheta^\star))}{\trace(\matH(\vtheta^\star))}$ {\em and} the soft rank of the Hessian at $\vtheta^\star$, and not only the trace ratio as advocated by \citet{thomas2020interplay}.
	
    \item Estimating $\nicefrac{\trace(\matC(\vtheta^\star))}{\trace(\matH(\vtheta^\star))}$ using training data alone is essentially impossible. The reason is that this quantity amounts to the ratio  between accuracy and average uncertainty.  As such, under overfitting (which is expected in overparameterized deep networks), the ratio tends to $0$ on the training set, while it tends to $\infty$ on the population if mistakes are made with high confidence. The logical conclusion is that we cannot estimate the generalization gap under overfitting, which is in line with the observations made by \citet{gastpar2024fantastic}. In contrast, the soft rank can be estimated quite robustly from training data alone.  Thus, for cases where $\nicefrac{\trace(\matC(\vtheta^\star))}{\trace(\matH(\vtheta^\star))}\approx 1$, we can estimate the generalization gap quite robustly, and in any case, we can reasonably estimate the asymptotic behavior of the generalization gap as long as the ratio between accuracy and average uncertainty remains bounded.
\end{itemize}

Seemingly, our results contradict the findings of \citet{dinh2017sharp}, which state that good minima of feed-forward networks can be arbitrarily sharp. However, there is no contradiction. The result in \citep{dinh2017sharp} is based on a sophisticated layer rescaling argument. However, this argument does not hold when regularization is involved. The reason is that even if the log-likelihood is invariant to such rescaling, the regularization term is not. Even if we take the regularization parameter to $0$, the soft rank converges to the rank of the Hessian, which is again invariant to rescaling. The bottom line is that when learning with weight decay, good minima {\em cannot} be arbitrarily sharp as long as the model is not arbitrarily confident on mistakes.

\subsection{Additional Related Work}
\citet{petzka2021relative} proposed the use of 
$$\|\vw^\star\|^2 \trace\left(\nabla^2_{\vw^\star} \frac{1}{n}\sum_{i=1}^n l(f(\phi(\vx_i)^T \vw^\star), \vy_i) + C\right),$$
(where $\phi(\vx)$ is the representation learned by a deep network) as a scale invariant measure of relative flatness, and connected it to generalization. However, while their measure is invariant to scaling of the parameters $\vw^\star$, it is not invariant to rescaling of the feature embedding $\phi(\vx)$.

In the Bayesian approach, instead of the expected population log-likelihood, one measures the marginal likelihood. A tractable approximation of the marginal likelihood for deep networks is possible via the Laplace's approximation.  This results in an expression similar to the expected log-likelihood, but with a regularized log determinant of the Hessian instead of its soft rank \citep{mackay1992bayesian, mackay1992thesis}. Both the soft rank $\rank_\lambda(\cdot)$ and the log determinant are increasing concave functionals on the semidefinite cone and thus they deliver comparable performance as regularizers and admit similarly efficient computational approximations. \citet{immer2021improving, immer2021scalable, immer2022invariance, immer2023stochastic} recently advocated the use of such functionals for model and hyperparameter selection.

It is customary to measure the effective dimension of learned deep representations via the soft rank of the representation covariance matrix. This measure is closely connected to the soft rank of the Fisher Information Matrix and Hessian. It has been shown that using SGD tends to find  representations with small effective dimension \citep{arora2019implicit, razin2020implicit, baratin2021implicit, huh2023the}. However, these works simply assume a connection between low effective dimension and generalization, without proving this. Furthermore,  they measure the effective dimension in a different, but subtle, way: as a measure of the entropy of the eigenvalues (or the normalized eigenvalues) of the representations covariance.

\citet{moody1991effective} referred to the soft rank of the empirical covariance of the gradients of the regularized square loss of the learned model as the effective number of parameters, and suggested it as a model selection criterion for non-linear regression.

\section{Preliminaries}

\subsection{Regularized Maximum Likelihood Estimation}
\label{subsec:regularized-mle}
Our analysis is based on viewing the learned parameters $\vtheta$ of a deep network as a regularized Maximum Likelihood Estimator (MLE). 

In regularized MLE, we assume data $\vz_1,\dots,\vz_n$ is sampled i.i.d from an unknown {\bf data distribution} $p_{\vz}(\cdot)$. We then set up a {\bf model distribution} $q(\cdot|\vtheta)$ parameterized by $\vtheta\in\matTheta\subseteq \R^d$. We learn the model by computing a minimizer $\hat{\vtheta}$ of 
\[
\smash{L_{\kappa}(\vtheta)\coloneqq \frac{1}{n}\sum^n_{i=1}-\log q(\vz_i|\vtheta)+\kappa(\vtheta) \coloneqq \frac{1}{n}\sum^n_{i=1} l_\kappa(\vtheta;\vz_i)}
\]
where $\kappa(\cdot)$ is a regularization function.  

Under certain regularity conditions, $\hat{\vtheta}$ is a consistent estimator of $\vtheta^\star\coloneqq \arg\min_{\vtheta\in\matTheta} \E_\vz\left[l_\kappa(\vtheta;\vz) \right]$, i.e., $\hat{\vtheta}\to\vtheta^\star$ in probability as $n\to \infty$. In the previous line, we made the simplifying assumption that $\matTheta$ is small enough so that $\E_\vz\left[l_\kappa(\cdot;\vz) \right]$ has a unique minimum. However, in terms of the likelihood (which in deep learning corresponds to the loss), the optimized loss is too optimistic with respect to population loss. Formally, define
$$
\text{Gap} \coloneqq \smash{\E_\vz\left[l_\kappa(\hat{\vtheta};\vz) \right]} - L_{\kappa}(\hat{\vtheta})
$$
Note that $\text{Gap}$ depends on the $\hat{\vtheta}$, which, in turn, depends on the training set $\vz_1,\dots,\vz_n$, and so is a random variable.\footnote{To avoid clutter, we chose not to make this dependency explicit.}
Then, it is straightforward to show that 
$$
\smash{\E_{\vz_1,\dots,\vz_n}}\left[\text{Gap}\right] \geq 0
$$

The regularized MLE framework is typically applied to deep learning in the following way. First, we assume we can write $\vz=(\vx,\vy)\in\calX \times \calY$, and the data distribution is factorized $p_\vz(\vz)=p_{\vy|\vx}(\vy)p_\vx(\vx)$. We factorize the model distribution as well: $q(\vz|\vtheta)=q(\vy|\vx,\vtheta)p_\vx(\vx)$. The distribution  $q(\cdot|\vx,\vtheta)$ is set to be
\[
q(\vy|\vx,\vtheta) = q_{\vy\mid\netN}(\vy|\netN_\vtheta(\vx)),
\]
where $q_{\vy\mid\netN}(\cdot|\netN_\vtheta(\vx))$ a member of some parametric family of distributions, with the parameter set to be the output $\netN_\vtheta (\vx)$ of the neural network. Different parametric families lead to different losses $L_{\kappa}(\vtheta)$ for learning the neural network.

\subsection{Exponential family neural network model}\label{subsec:exp-family}
We further assume that this parametric family is an exponential family, i.e.,
\[
q_{\vy\mid\calN}(\vy\mid\calN) = h(\vy)e^{\left(\calN^\T\vy-A(\calN)\right)}.
\] 
We call $q(\cdot|\vtheta)$ of the form $q(\vz|\vtheta)=q_{\vy\mid\calN}(\vy\mid\calN_\theta(x))p_\vx(\vx)$ an \emph{exponential family neural network model}. For example, setting $q_{\vy\mid\calN}(\vy \mid \calN_\theta(x))$ to a normal distribution with mean $\netN_\vtheta(\vx)$ results in the squared loss, while setting it to a binomial distribution with probability $\netN_\vtheta(\vx)$ results in the categorical cross-entropy loss for classification.

In our analysis, we use the following function, which we call ``cost function'':
\[
c(\vy,\calN) \coloneqq -\log q_{\vy\mid\calN}(\vy\mid\calN)=-\calN^\T\vy+A(\calN)-\log h(\vy).
\]
For example, in regression with known noise variance $\matI$, we have 
\[
c(\vy,\calN)=\frac{1}{2}\TnormS{\vy-\calN}+\text{const} = -\calN^\T\vy+\frac{1}{2}\TnormS{\calN}+\text{const},
\]
that is $A(\calN)=\frac{1}{2}\TnormS{\calN}$. In classification with softmax cross-entropy cost we have,
\[
c(\vy,\calN)=-\calN^\T\vy+\sum_i e^{\calN_i},
\]
and $A(\calN)=\sum_i e^{\calN_i}$.

\subsection{Information Matrices}
We recall three (regularized) information matrices, whose exact relations have been the source of confusion among researchers \citep{thomas2020interplay,kunstner2019limitations}:
\begin{align*}
\text{Covariance:} &\quad \matC_\kappa(\vtheta)\coloneqq \E_{\vz\sim p_z} \left[\nabla_\vtheta l_\kappa(\vtheta;\vz)\nabla_\vtheta l_\kappa(\vtheta;\vz)^\T\right] \\
\text{Fisher Information:} &\quad \matF(\vtheta)\coloneqq \E_{\vz\sim q_\vz(\cdot|\vtheta)} \left[\nabla_\vtheta l_0(\vtheta;\vz)\nabla_\vtheta l_0(\vtheta;\vz)^\T\right] \\
\text{Hessian:} &\quad \matH_\kappa(\vtheta)\coloneqq \E_{\vz\sim p_z} \left[\nabla^2_\vtheta l_\kappa(\vtheta;\vz)\right]
\end{align*}

Information matrices play a key role in analyzing the generalization of MLEs. \citet{Shibata1989} analyzed the asymptotic behavior of the expected generalization gap (additional references are \citep{murata1994network, liu1995unbiased}):
\begin{equation}
    \label{eq:shibata}
    \lim_{n\to\infty} n\cdot\E\left[\text{Gap}\right]=\trace(\matC_\kappa(\vtheta^\star)\matH_\kappa(\vtheta^\star)^{-1})
\end{equation}
Eq.~\eqref{eq:shibata} is a generalization of TIC (which applies only to unregularlized MLE), which is, in turn, a generaliation of {\em Akaike Information Criterion} (AIC), which applies only to correctly-specified models~\citep{akaike1974new}.

Let us denote by $\matC(\vtheta), \matF(\vtheta)$ and $\matH(\vtheta)$ the corresponding information matrices when no regularization is involved ($\kappa(\vtheta)=0$). When using Tikhonov (aka ridge) regularization, i.e., $\kappa(\vtheta)=\frac{1}{2}\TnormS{\matLambda^{\nicefrac{1}{2}} \vtheta}$ for some symmetric positive definite $\matLambda$, we denote the information matrices by $\matC_{\matLambda}(\vtheta)$ and $\matH_{\matLambda}(\vtheta)$. We have the following:

\begin{proposition}
\label{prop:C_Lambda}
If $\vtheta^\star$ is a local minimizer,
\begin{align}
\textstyle
    \matH_\matLambda(\vtheta) &= \matH(\vtheta) + \matLambda \nonumber \\
    \matC_{\matLambda}(\vtheta^\star) & = \matC(\vtheta^\star) -\matLambda \vtheta^\star {\vtheta^\star}^\T \matLambda   \label{eq:C_Lambda}
\end{align}
\end{proposition}
\begin{proof}
\label{app:proof_C_Lambda}

The first equality follows from straightforward matrix calculus.

Since $\vtheta^\star$ is a local minimizer, $$\nabla_\vtheta \E_{\vz\sim p_\vz}\left[l_0(\vtheta;\vz) + \frac{1}{2}\TnormS{\matLambda^{\nicefrac{1}{2}} \vtheta}\right]\at{\vtheta=\vtheta^\star}=0$$ we have  $$\nabla_\vtheta \E_{\vz\sim p_\vz}\left[l_0(\vtheta;\vz)\right]\at{\vtheta=\vtheta^\star} = -\matLambda \vtheta^\star.$$ Denote $g(\vz) \coloneqq \nabla_\vtheta l_0(\vtheta;\vz)\at{\vtheta=\vtheta^\star}$, so we have $$\matC(\vtheta^\star) = \E_\vz [g(\vz)g(\vz)^\T]$$ and $$\E_\vz[g(\vz)] = \nabla_\vtheta \E_{\vz\sim p_\vz}\left[l_0(\vtheta;\vz)\right]\at{\vtheta=\vtheta^\star} = -\matLambda \vtheta^\star$$ Thus:
\begin{equation*}
    \matC_\matLambda(\vtheta^\star) = \E_{\vz\sim p_z} \left[\left(g(\vz) + \matLambda \vtheta^\star\right)\left( g(\vz) + \matLambda \vtheta^\star \right)^\T\right] = \matC(\vtheta^\star) -\matLambda \vtheta^\star {\vtheta^\star}^\T \matLambda
\end{equation*}
\end{proof}

\subsection{Calibrated Neural Networks}

A model is \emph{calibrated} if predicted probabilities match actual probabilities conditioned on predictions. Formally:

\begin{definition}[Calibrated Neural Network Model]
\label{definition:calibrate}
Given $\vtheta$, let $\vr_\vtheta \coloneqq \calN_\vtheta(\vx)$. The neural network model is \emph{calibrated} at $\vtheta$ if
\[
p_{\vy\mid\vr_\vtheta}(\cdot\mid\vr_\vtheta) = q_{\vy\mid\calN}(\cdot\mid\vr_\vtheta)
\]
almost surely (a.s. in $\vx$).
\end{definition}

For correctly specified or calibrated models, we show that the covariance of gradients, Fisher Information, and expected Hessian coincide if prediction errors and uncertainties are uncorrelated with derivatives of network outputs:
\begin{equation}
\label{eq:uncor_err}
\begin{gathered}
\E[\matJ_{\vtheta}^\T \ve_\vtheta\ve_\vtheta^\T \matJ_{\vtheta}]
= \E[\matJ_{\vtheta}^\T \E[\ve_\vtheta\ve_\vtheta^\T] \matJ_{\vtheta}], \quad
\E[\matJ_{\vtheta}^\T \matSigma_\vtheta \matJ_{\vtheta}]
= \E[\matJ_{\vtheta}^\T \E[\matSigma_\vtheta] \matJ_{\vtheta}], \\
\E[\ve_{\vtheta,i}\nabla^2_\vtheta \calN_{\vtheta,i}(\vx)]
= \E[\ve_{\vtheta,i}] \E[\nabla^2_\vtheta \calN_{\vtheta,i}(\vx)],
\end{gathered}
\end{equation}
where $\matJ_{\vtheta}$ is the Jacobian of $\calN_\vtheta(\vx)$,
\begin{equation}   \label{eq:e_theta}
\ve_\vtheta \coloneqq \hat{\vy}_\vtheta-\vy,
\quad
\hat{\vy}_\vtheta\coloneqq \E_{\vy\sim q_{\vy\mid\netN}(\cdot|\netN_\vtheta(\vx))}[\vy]
\quad
 \text{and} 
\quad
\matSigma_\vtheta \coloneqq \mathrm{Var}_{\vy\sim q_{\vy\mid\netN}(\cdot|\netN_\vtheta(\vx))}[\vy].
\end{equation}

(An empirical evidence for these noncorrelations is provided in Section~\ref{app:experimental_results_CF}.)

\begin{proposition}
\label{prop:calibrated_model}
If $q_{\vy\mid\calN}$ is an exponential family, Eqs. \eqref{eq:uncor_err} hold, and the model is calibrated at $\vtheta$, then
\[
\matC(\vtheta) = \matF(\vtheta) = \matH(\vtheta).
\]
\end{proposition}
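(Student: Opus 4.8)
The plan is to reduce the whole statement to the chain rule plus two classical identities for natural exponential families, and then let the calibration hypothesis together with the decorrelation equations \eqref{eq:uncor_err} handle the remaining bookkeeping.

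First I would record the exponential-family facts. Differentiating the normalization $\int q_{\vy\mid\netN}(\vy\mid\netN)\,d\vy=1$ once and twice under the integral sign yields $\nabla_\netN A(\netN)=\E_{\vy\sim q_{\vy\mid\netN}(\cdot\mid\netN)}[\vy]$ and $\nabla^2_\netN A(\netN)=\mathrm{Var}_{\vy\sim q_{\vy\mid\netN}(\cdot\mid\netN)}[\vy]$, which at $\netN=\calN_\vtheta(\vx)$ are exactly $\hat\vy_\vtheta$ and $\matSigma_\vtheta$. Since $c(\vy,\netN)=-\netN^\T\vy+A(\netN)-\log h(\vy)$, this gives $\nabla_\netN c=\hat\vy_\vtheta-\vy=\ve_\vtheta$ and $\nabla^2_\netN c=\matSigma_\vtheta$. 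Using $l_0(\vtheta;\vz)=c(\vy,\calN_\vtheta(\vx))-\log p_\vx(\vx)$ with the last term independent of $\vtheta$, the chain rule then gives $\nabla_\vtheta l_0(\vtheta;\vz)=\matJ_\vtheta^\T\ve_\vtheta$ and $\nabla^2_\vtheta l_0(\vtheta;\vz)=\matJ_\vtheta^\T\matSigma_\vtheta\matJ_\vtheta+\sum_i\ve_{\vtheta,i}\,\nabla^2_\vtheta\calN_{\vtheta,i}(\vx)$.

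Next I would substitute these into the three definitions. Directly, $\matC(\vtheta)=\E_{p_\vz}[\matJ_\vtheta^\T\ve_\vtheta\ve_\vtheta^\T\matJ_\vtheta]$. For $\matF(\vtheta)$, under $q_z(\cdot\mid\vtheta)$ one draws $\vy\sim q_{\vy\mid\netN}(\cdot\mid\calN_\vtheta(\vx))$, for which $\E_\vy[\ve_\vtheta\ve_\vtheta^\T\mid\vx]=\matSigma_\vtheta$, and since $\matJ_\vtheta$ depends on $\vx$ only, $\matF(\vtheta)=\E_{\vx\sim p_\vx}[\matJ_\vtheta^\T\matSigma_\vtheta\matJ_\vtheta]$. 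For $\matH(\vtheta)=\E_{p_\vz}[\matJ_\vtheta^\T\matSigma_\vtheta\matJ_\vtheta]+\E_{p_\vz}\!\big[\sum_i\ve_{\vtheta,i}\nabla^2_\vtheta\calN_{\vtheta,i}(\vx)\big]$, both $\matJ_\vtheta$ and $\matSigma_\vtheta$ are functions of $\vx$ alone, so the first term already equals $\matF(\vtheta)$. It then remains to kill the residual term of $\matH$ and to bridge $\matC$ to $\matF$, and this is where calibration enters: conditioning on $\vr_\vtheta=\calN_\vtheta(\vx)$, Definition~\ref{definition:calibrate} gives $\E_p[\vy\mid\vr_\vtheta]=\hat\vy_\vtheta$ and $\mathrm{Var}_p[\vy\mid\vr_\vtheta]=\matSigma_\vtheta$, hence $\E_p[\ve_\vtheta\mid\vr_\vtheta]=\vzero$ and $\E_p[\ve_\vtheta\ve_\vtheta^\T\mid\vr_\vtheta]=\matSigma_\vtheta$; averaging over $\vx$ yields $\E_p[\ve_\vtheta]=\vzero$ and $\E_p[\ve_\vtheta\ve_\vtheta^\T]=\E_{\vx\sim p_\vx}[\matSigma_\vtheta]$. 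Now the third line of \eqref{eq:uncor_err} makes the residual term of $\matH$ equal to $\sum_i\E_p[\ve_{\vtheta,i}]\,\E[\nabla^2_\vtheta\calN_{\vtheta,i}(\vx)]=\matZero$, so $\matH(\vtheta)=\matF(\vtheta)$; the first line of \eqref{eq:uncor_err} gives $\matC(\vtheta)=\E[\matJ_\vtheta^\T(\E_{\vx\sim p_\vx}[\matSigma_\vtheta])\matJ_\vtheta]$, while the second line gives $\matF(\vtheta)=\E[\matJ_\vtheta^\T(\E_{\vx\sim p_\vx}[\matSigma_\vtheta])\matJ_\vtheta]$ as well, so $\matC(\vtheta)=\matF(\vtheta)$ and the three matrices coincide.

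The only genuinely delicate point is that calibration constrains the law of $\vy$ conditional on the \emph{network output} $\vr_\vtheta$, which is coarser than $\vx$ whenever $\calN_\vtheta$ is not injective, so $p_\vz$ need not equal $q_\vz$ and one cannot simply invoke the correctly specified case; the decorrelation hypotheses \eqref{eq:uncor_err} are exactly what permits moving these $\vr_\vtheta$-conditional identities past the $\vx$-measurable factors $\matJ_\vtheta$ and $\nabla^2_\vtheta\calN_\vtheta$. The remaining technical nicety — interchanging $\nabla_\netN$ with integration over $\vy$ for the exponential-family identities — is standard under the usual regularity assumptions.
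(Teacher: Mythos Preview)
Your proposal is correct and follows essentially the same route as the paper's proof: chain-rule expressions for $\matC$, $\matF$, $\matH$ via the exponential-family identities $\nabla_\netN c=\ve_\vtheta$ and $\nabla^2_\netN c=\matSigma_\vtheta$, then conditioning on $\vr_\vtheta$ under calibration to obtain $\E_p[\ve_\vtheta]=0$ and $\E_p[\ve_\vtheta\ve_\vtheta^\T]=\E[\matSigma_\vtheta]$, with the three lines of~\eqref{eq:uncor_err} used to move these past the $\vx$-measurable factors $\matJ_\vtheta$ and $\nabla^2_\vtheta\calN_{\vtheta,i}$. Your explicit remark that calibration is only $\vr_\vtheta$-conditional (so one cannot simply invoke the correctly specified case) is exactly the reason the decorrelation hypotheses are needed, and the paper's proof handles this the same way.
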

\begin{proof}
By the chain rule, and  our assumptions, 
\begin{align*}
\matC(\vtheta) 
&= \E\left[\matJ_{\vtheta}(\vx)^\T \ve_\vtheta(\vx,\vy)\ve_\vtheta(\vx,\vy)^\T \matJ_{\vtheta}(\vx)\right]\\ 
&= \E\left[\matJ_{\vtheta}(\vx)^\T \E\left[\ve_\vtheta(\vx,\vy)\ve_\vtheta(\vx,\vy)^\T\right] \matJ_{\vtheta}(\vx)\right].
\end{align*}
By the chain rule, law of iterated expectation, and our assumptions,
\begin{align*}
\matF(\vtheta) 
&= \E_{\vz\sim q(\cdot|\vtheta)}\left[\matJ_\vtheta(\vx)^\T \ve_\vtheta(\vx,\vy)\ve_\vtheta(\vx,\vy)^\T \matJ_\vtheta(\vx)\right]\\ 
&= \E_{\vx\sim p_\vx}\left[\matJ_\vtheta(\vx)^\T \E_{\vy\sim{q_{\vy\mid\calN}}(\cdot \mid \calN)}\left[\ve_\vtheta(\vx,\vy)\ve_\vtheta(\vx,\vy)^\T\right] \matJ_\vtheta(\vx)\right]\\
&= \E\left[\matJ_\vtheta(\vx)^\T \matSigma_\vtheta(\vx) \matJ_\vtheta(\vx)\right]\\
&=\E\left[\matJ_\vtheta(\vx)^\T \E\left[\matSigma_\vtheta(\vx)\right] \matJ_\vtheta(\vx)\right].
\end{align*}
Define,
\[
\matB(\vy,\calN) \coloneq \nabla_\calN c(\vy,\calN)\nabla_\calN c(\vy,\calN)^\T\quad\text{and}\quad\bar{\matSigma}(\calN)\coloneqq \nabla^2_\calN c(\vy,\calN).
\]
Let
\(
\vr_\vtheta\coloneq\calN_\vtheta(\vx).
\) 
Note that 
\[
\quad\matB(\vy,\vr_\vtheta)=\matB(\vy,\calN_\vtheta(\vx))=\ve(\vx,\vy)\ve(\vx,\vy)^T\quad\text{and}\quad\bar{\matSigma}(\vr_\vtheta)=\bar{\matSigma}(\calN_\vtheta(\vx))=\matSigma_\vtheta(\vx)
\]
By the calibration assumption,
\[
p_{\vy\mid\vr_\vtheta}(\cdot\mid\vr_\vtheta)=q_{\vy\mid\calN}(\cdot\mid\vr_\vtheta),
\]
where $p_{\vy\mid\vr_\vtheta}(\cdot\mid\vr_\vtheta)$ is the real distribution of $\vy$ given $\vr_\vtheta$. Since $q_{\vy\mid\calN}$ is of exponential family we also have
\[
\E_{\vy\sim{q_{\vy\mid\calN}(\vy\mid\vr_\vtheta)}}\matB(\vy,\vr_\vtheta)=\bar{\matSigma}(\vr_\vtheta).
\]
Now, by total expectation,
\begin{align*}
\E\left[\ve_\vtheta(\vx,\vy)\ve_\vtheta(\vx,\vy)^\T\right]
&= \E\left[\matB(\vy,\vr_\vtheta)\right] \\
&= \int\int \matB(\vy,\vr_\vtheta)\, p_{\vy|\vr_\vtheta}(\vy\mid\vr_\vtheta)p_{\vr_\vtheta}(\vr_\vtheta)\, d\vy\, d\vr_\vtheta\\
&= \int\int \matB(\vy,\vr_\vtheta)\, q_{\vy\mid\calN}(\vy\mid\vr_\vtheta)p_{\vr_\vtheta}(\vr_\vtheta)\, d\vy\, d\vr_\vtheta\\
&= \int p_{\vr_\vtheta}(\vr_\vtheta) \left(\int \matB(\vy,\vr_\vtheta)\, q_{\vy\mid\calN}(\vy\mid\vr_\vtheta)\, d\vy\right) d\vr_\vtheta\\
&= \E\left[\bar{\matSigma}_\vtheta(\vr_\vtheta)\right] \\
&= \E\left[\matSigma_\vtheta(\vx)\right], \\
\end{align*}
and thus, \(\matC(\vtheta)=\matF(\vtheta)\).

For $\matH(\vtheta)=\matF(\vtheta)$, recall that for an exponential family,
\[
\matSigma_\vtheta(\vx) \coloneq \nabla^2_\calN c(\vy,\calN_\vtheta(\vx))=  \E_{\vy\sim{q_{\vy\mid\calN}}(\cdot\mid\calN_\vtheta(\vx))}\left[ \ve_\vtheta(\vx,\vy)\ve_\vtheta(\vx,\vy)^\T \right].
\]
Thus, as $\nabla l_\vtheta(\vtheta;\vx,\vy)=\matJ_{\vtheta}(\vx)^\T\ve(\vx,\vy)$, by the chain rule
\[
\matH(\vtheta)=\E\left[\matJ_\vtheta(\vx)^\T\matSigma_\vtheta(x)\matJ_\vtheta(\vx)\right] +\sum_i \E\left[\ve_{\vtheta,i}(\vx,\vy) \nabla^2_\theta\calN_{\vtheta,i}(\vx)\right]
=\matF(\vtheta)+\sum_i \E\left[\ve_{\vtheta,i}(\vx,\vy) \nabla^2_\theta\calN_{\vtheta,i}(\vx)\right]. 
\]
Now, assume that for some $\alpha\in\R$,
\[
\E\left[\ve_{\vtheta,i}(\vx,\vy) \nabla^2_\theta\calN_{\vtheta,i}(\vx)\right]=\alpha\E\left[\ve_{\vtheta,i}(\vx,\vy)\right] \E\left[\nabla^2_\theta\calN_{\vtheta,i}(\vx)\right].
\]
(weaker then the assumption in the Eq.~\eqref{eq:uncor_err}). Note that
\[
\ve_\vtheta(\vx,\vy) = -\nabla_\calN \log q_{\vy\mid\calN}(\vy, \calN_\vtheta(\vx)) = -\nabla_\calN \log q_{\vy\mid\calN}(\vy, \vr_\vtheta). 
\]
Now, since $\vr_\vtheta$ is a true parameter of $q_{\vy\mid\calN}$, we have that
\[
\E_{\vy\sim{q_{\vy\mid\calN}}(\cdot \mid \vr_\vtheta)}\left[-\nabla_\calN \log q_{\vy\mid\calN}(\vy\mid \vr_\vtheta)  \right]=0,
\]
and thus, by total expectation under the calibration assumption we have,
\begin{align*}
\E\left[\ve_\vtheta(\vx,\vy)\right]
&=\E\left[-\nabla_\calN \log q_{\vy\mid\calN}(\vy\mid \vr_\vtheta) \right]\\
&=\E_{\vr_\vtheta}\left[\E_{\vy\sim{p_{\vy\mid\vr_\vtheta}}(\vy\mid \vr_\vtheta)}\left[-\nabla_\calN \log q_{\textbf{}}(\vy\mid \vr_\vtheta) \right]\right]\\
&=\E_{\vr_\vtheta}\left[\E_{\vy\sim{q_{\vy\mid\calN}}(\vy\mid \vr_\vtheta)}\left[-\nabla_\calN \log q_{\vy\mid\calN}(\vy\mid \vr_\vtheta) \right]\right]\\
&=0.
\end{align*}
\end{proof}
\subsection{Soft Ranks and Soft Projections}

For a positive semidefinite matrix $\matS$ and a positive definite matrix $\matLambda$, we use the notations $\matP_\matLambda(\matS) \coloneqq \matS(\matS+\matLambda)^{-1}$ and $\matP^{\perp}_\matLambda(\matS) \coloneqq \matLambda(\matS+\matLambda)^{-1}$. We have:
\begin{equation}
\label{eq:P_Lambda}
\mat0\preceq\matP_\matLambda(\matS)^\T\matP_\matLambda(\matS) = (\matS+\matLambda)^{-1}\matS^2(\matS+\matLambda)^{-1} \prec \matI \quad \text{and} \quad \matI - \matP_\matLambda(\matS)=\matP^{\perp}_\matLambda(\matS).
\end{equation}
Thus:
\begin{equation}
    \label{eq:P_bound}
    \Tnorm{\matP_\matLambda(\matS)}< 1 \qquad \text{and} \qquad \Tnorm{\matP^{\perp}_\matLambda(\matS)} \le 1
\end{equation}
In the limit of $\matLambda\to \mat0$, $\matP_\matLambda$ is the projection onto the eigenspace of $\matS$ and thus can be seen as a form of ``soft projection''. Similarly, $\rank_\matLambda(\matS) \coloneqq \trace (\matP_\matLambda(\matS))$ can be thought of as a soft rank of $\matS$. We also use $\rank^{(2)}_\matLambda(\matS) \coloneqq \trace (\matP_\matLambda(\matS)^2)$. Note that:
\begin{equation}
\lim_{\matLambda \to 0}\rank^{(2)}_\matLambda(\matS) =  \lim_{\matLambda \to 0}\rank_\matLambda(\matS) = \rank(\matS).
\end{equation}
For $\lambda>0$, we define $\rank_\lambda\coloneq\rank_{\lambda\matI}$ and $\rank^{(2)}_\lambda\coloneq\rank^{(2)}_{\lambda\matI}$. 

\section{Sharp or Flat? On the Importance of Selecting an Appropriate Inner Product}

``Sharpness'' and ``flatness'', and more generally curvature, are geometric notions. Thus, when analyzing the curvature of the loss function it is not enough to specify the parameter domain (in our case: $\matTheta$). We need to also specify how angles and distances are measured, by imposing an inner product, or more generally, if $\matTheta$ is a manifold, a Riemannian metric. Much of the literature on the relation between flatness and generalization loss overlooks this fact, and simply assumes the use of the standard dot product as the inner product. This renders the measures sensitive to reparametrization~\citep{jang2022reparametrization,kristiadi2024geometry}, and allows one to show that arbitrarily ``sharp'' minima can generalize well~\citep{dinh2017sharp}.

Any metric will induce a reparametrization invariant (Riemannian) Hessian of the loss at the minima~\citep{kristiadi2024geometry}. However, when choosing a metric, it is crucial to adjust it to the downstream task, and this can have a drastic effect on whether we want a sharp or flat minima. We illustrate this point with several examples.

\subsection{Parameter Estimation}
Suppose we are interested in estimating the optimal (in the sense of minimal loss on the population) parameters $\vtheta^\star$ (this is {\em not} the situation in deep learning. We only discuss this scenario for illustration purposes). On correctly-specified models, and without regularization, this corresponds to the true parameters, though we do not restrict ourselves to this scenario. A natural metric for the error is then $\|\hat{\vtheta} - \vtheta^\star\|_2$, and this corresponds to using the standard Euclidean inner product. It is well-known that the asymptotic distribution of $\sqrt{n}\cdot(\hat{\vtheta} - \vtheta^\star)$ is normal with zero mean and variance $\matH_\kappa(\vtheta^\star)^{-1}\matC_\kappa(\vtheta^\star)\matH_\kappa(\vtheta^\star)^{-1}$ \citep{newey1994large}. Thus, we have 
\begin{equation}
\label{eq:theta-error}
\lim_{n\to\infty} n\cdot\E\left[\TnormS{\hat{\vtheta} - \vtheta^\star} \right] = \trace(\matC_\kappa(\vtheta^\star)\matH_\kappa(\vtheta^\star)^{-2})
\end{equation}
(this classical analysis leads to the notion of A-optimality). Now, if an exponential family neural network  model is used, with noncorrelations specified in Eqs.~\eqref{eq:uncor_err} is calibrated at $\vtheta^\star$, and there is no regularization, the quantity on the right is equal to $\trace(\matH(\vtheta^\star)^{-1})$. Since the Hessian is inverted here, {\em we see that when estimating the parameters using such a calibrated model with no regularization, we want sharp, not flat, minima!} 

\subsection{Geometry for Generalization}
Assume that for every $\vtheta\in\matTheta$, the function $l_\kappa(\vtheta;\cdot)$ is square integrable with respect to $p_\vz$. We identify each parameter vector $\vtheta$ with the loss function it defines: $l_\kappa(\vtheta;\cdot)$. In this view, the MLE $l_\kappa(\hat{\vtheta};\cdot)$ is an estimator of $l_\kappa(\vtheta^\star;\cdot)$.  Distances between the two can then be measured using the $L_2$ distance with respect to $p_\vz$: 
$$\XnormS{l_\kappa(\hat{\vtheta};\cdot)-l_\kappa(\vtheta^\star;\cdot)}{L_2(p_\vz)} = \E_{\vz\sim p_\vz}\left[\left(\hat{l}(\hat{\vtheta};\vz) - \hat{l}(\vtheta^\star;\vz)\right)^2\right].$$
Clearly, this is a reasonable measure for generalization. 

If distances are measured using the $L_2$ norm, inner products should be defined via the $L_2$ inner product. 
To do so, first assume that $\matTheta$ is an open set, and view $\matTheta$ as a submanifold of $\R^d$. We impose a Riemannian metric on $\matTheta$ in the following way. At a point $\vtheta\in\matTheta$, the tangent space is isomorphic to $\R^d$. For two tangent directions $\veta,\vxi\in\R^d$, the inner product is defined as the $L_2(p_\vz)$ inner product between the infinitesimal perturbation of $l_\kappa(\vtheta;\cdot)$ in the direction (in parameter space) of $\veta$ and the infinitesimal perturbation of $l_\kappa(\vtheta;\cdot)$ in the direction of $\vxi$. Formally, for $\vrho\in\R^d$ let
$$
\delta_\vrho l_\kappa(\vtheta; \vz) \coloneqq l_\kappa(\vtheta+\vrho; \vz) - l_\kappa(\vtheta; \vz)
$$
and define
$$
\metric{\veta}{\vxi}{\vtheta} \coloneqq \lim_{\varepsilon\to 0} \varepsilon^{-2}\cdot\metric{\delta_{\varepsilon\cdot\veta} l_\kappa(\vtheta;\cdot)}{\delta_{\varepsilon \cdot \vxi} l_\kappa(\vtheta;\cdot)}{L_2(p_{\vz})}
$$
By Taylor expansion around $\vtheta$ we have $\delta_\vrho l_\kappa(\vtheta; \vz) = \nabla_\vtheta l_\kappa(\vtheta; \vz)^\T \vrho + o(\Tnorm{\vrho})$.
Now, 
\begin{align*}
\metric{\delta_{\varepsilon\cdot\veta} l_\kappa(\vtheta;\cdot)}{\delta_{\varepsilon \cdot \vxi} l_\kappa(\vtheta;\cdot)}{L_2(p_{\vz})}
&=  \varepsilon^2 \cdot \int   \nabla_\vtheta l_\kappa(\vtheta; \vz)^\T \veta \cdot  \nabla_\vtheta l_\kappa(\vtheta; \vz)^\T \vxi p_{\vz}d\vz + o(\varepsilon^2)\\
&= \varepsilon^2 \cdot \veta^\T \left(\int   \nabla_\vtheta l_\kappa(\vtheta; \vz)\nabla_\vtheta l_\kappa(\vtheta; \vz)^\T p_{\vz}d\vz\right) \vxi + o(\varepsilon^2) \\
&= \varepsilon^2 \cdot \veta^\T \matC_\kappa(\vtheta) \vxi + o(\varepsilon^2)
\end{align*}
Thus, $\metric{\veta}{\vxi}{\vtheta} = \veta^\T \matC_\kappa(\vtheta) \vxi$, and the Riemannian Hessian at $\vtheta^\star$ is $\bar{\matH}_\kappa(\vtheta^\star) \coloneqq \matC_\kappa(\vtheta^\star)^{-1}\matH_\kappa(\vtheta^\star)$ \citep{kristiadi2024geometry}.

Using similar arguments, we have
\begin{align*}
\XnormS{l_\kappa(\hat{\vtheta};\cdot)-l_\kappa(\vtheta^\star;\cdot)}{L_2(p_\vz)}
= (\hat{\vtheta} - \vtheta^\star)^\T \matC_\kappa (\vtheta^\star) (\hat{\vtheta} - \vtheta^\star) + o\left(\TnormS{\hat{\vtheta} - \vtheta^\star}\right)
\end{align*}
Alluding again to the asymptotic normality of $\sqrt{n}\cdot(\hat{\vtheta} - \vtheta^\star)$, we have
\begin{align}
\nonumber
\lim_{n\to\infty}n\cdot\E\left[\XnormS{l_\kappa(\hat{\vtheta};\cdot)-l_\kappa(\vtheta^\star;\cdot)}{L_2(p_\vz)} \right]
\nonumber
&=\lim_{n\to\infty} \E\left[\sqrt{n}(\hat{\vtheta} - \vtheta^\star)^\T \matC_\kappa (\vtheta^\star) \sqrt{n}(\hat{\vtheta} - \vtheta^\star)\right] \\
\nonumber
&\qquad + \E\left[o(\sqrt{n}(\hat{\vtheta} - \vtheta^\star)^\T\sqrt{n}(\hat{\vtheta} - \vtheta^\star))\right]\\ 
\label{eq:l2_limit}
\nonumber
&=\lim_{n\to\infty} \trace(\matC_\kappa(\vtheta^\star)\matH_\kappa(\vtheta^\star)^{-1}\matC_\kappa(\vtheta^\star)\matH_\kappa(\vtheta^\star)^{-1})+o(1)\\
&=\trace(\matC_\kappa(\vtheta^\star)\matH_\kappa(\vtheta^\star)^{-1}\matC_\kappa(\vtheta^\star)\matH_\kappa(\vtheta^\star)^{-1})\\
\nonumber
&= \trace(\bar{\matH}_\kappa(\vtheta^\star)^{-2})
\end{align}
Again, we see that a sharp minima is preferable, if sharpness is measured using the appropriate geometry.

\subsection{Information Geometry View}
We can also take an information geometry viewpoint, in which the object of interest is the predictive distributions defined by the parameters, i.e., $q(\cdot | \vtheta)$.  Since the marginal $p_\vx$ is correct also for the model,  we mainly care about the distribution $q(\cdot|\vx,\vtheta)$ given for various $\vx$s. Thus, to define a distance between two $q(\cdot | \vtheta)$ and $q(\cdot | \vtheta')$ as the mean over $\vx$ of the KL-divergence:
$$
d(q(\cdot|\vtheta),q(\cdot|\vtheta') )\coloneqq \E_{\vx\sim p_\vx}\left[\text{KL}(q(\cdot|\vx,\theta) \parallel q(\cdot|\vx,\theta')\right]
$$
\citet{kim22fisher} showed that for $\veta$ we have 
$$
d(q(\cdot|\vtheta),q(\cdot|\vtheta + \veta) ) \approx \veta^\T \matF(\vtheta) \veta
$$
Motivated by this, we define the metric as  $\metric{\veta}{\vxi}{\vtheta} \coloneqq \veta^\T (\matF(\vtheta) + \matLambda)\vxi$, where we also added the regularization term for Tikhonov regularization. 

The asymptotic behavior of $n\cdot d(q(\cdot|\vtheta^\star),q(\cdot|\hat{\vtheta}) )$ is approximately equal to $ \trace(\matF(\vtheta^\star) \matH_\kappa(\vtheta^\star)^{-1} \matC_\kappa(\vtheta^\star) \matH_\kappa(\vtheta^\star)^{-1})$. If the neural network model is calibrated at $\vtheta^\star$ with noncorrelations of Eqs.~\eqref{eq:uncor_err} holding, and Tikhonov regularization is used, such that $\vtheta^\star$ is bounded as $\matLambda \to 0$, we get
\begin{align*}
 \trace(\matF(\vtheta^\star) \matH_\matLambda(\vtheta^\star)^{-1} \matC_\matLambda(\vtheta^\star) \matH_\matLambda(\vtheta^\star)^{-1})
 &=
 \trace(\matC(\vtheta^\star) \matH_\matLambda(\vtheta^\star)^{-1} \matC_\matLambda(\vtheta^\star) \matH_\matLambda(\vtheta^\star)^{-1})\\
 &= \trace(\matC(\vtheta^\star)\matH_\matLambda(\vtheta^\star)^{-1}[\matC(\vtheta^\star)-\matLambda\vtheta^\star{\vtheta^\star}^\T\matLambda]\matH_\matLambda(\vtheta^\star)^{-1})\\
 &= \trace(\bar{\matH}_\matLambda(\vtheta^\star)^{-2}) - \trace(\matH(\vtheta^\star)\matH_\matLambda(\vtheta^\star)^{-1}\matLambda\vtheta^\star{\vtheta^\star}^\T\matLambda\matH_\matLambda(\vtheta^\star)^{-1})\\
 &= \trace(\bar{\matH}_\matLambda(\vtheta^\star)^{-2}) - {\vtheta^\star}^\T\matLambda\matH_\matLambda(\vtheta^\star)^{-1}\matH(\vtheta^\star)\matH_\matLambda(\vtheta^\star)^{-1}\matLambda\vtheta^\star\\
 &\le
 \trace(\bar{\matH}_\matLambda(\vtheta^\star)^{-2}) - \TnormS{\vtheta^\star}\Tnorm{\matLambda\matH_\matLambda(\vtheta^\star)^{-1}\matH(\vtheta^\star)\matH_\matLambda(\vtheta^\star)^{-1}\matLambda}\\
 &= \trace(\bar{\matH}_\matLambda(\vtheta^\star)^{-2}) + O(\Tnorm{\matLambda})
\end{align*}
The reason for the last equation is that
\[
\Tnorm{\matH_\matLambda(\vtheta^\star)^{-1}\matH}\le 1\quad\text{and}\quad\Tnorm{\matH_\matLambda(\vtheta^\star)^{-1}\matLambda}<1.
\]
Again, we have the trace of the square inverse of the Riemannian Hessian (the proof is similar to the proof of Proposition~\ref{prop:limit_with_o}).

\section{Generalization for Calibrated Models with Tikhonov Regularization}

We now consider the case where Tikhonov regularization is used and the model is an exponential family neural network, with the non-correlations of  Eqs.~\eqref{eq:uncor_err} holding, and is calibrated at $\vtheta^\star$. We show that we can evaluate the gaps between $\hat{\vtheta}$ and $\vtheta^\star$ well using only the soft rank of the Fisher Information Matrix (FIM) $\matF(\vtheta^\star)$. This is important since, as we shall see, the soft rank of the FIM can be estimated well using training data, while other measures in use cannot. %


\begin{proposition}
\label{prop:asymptotic_bounds}
    For a calibrated exponential family neural network model with Tikhonov regularization, if Eqs.~\eqref{eq:uncor_err} hold we have
    \[
    \begin{array}{ll}
    \displaystyle \lim_{n\to\infty} n\cdot\E\left[\TnormS{\hat{\vtheta} - \vtheta^\star} \right] 
    & \leq \trace\left(\left(\matF(\vtheta^\star)+\matLambda\right)^{-1}\right) \\[10pt]
    \displaystyle \lim_{n\to\infty} n\cdot\E\left[\mathrm{Gap}\right] 
    & \leq \rank_\matLambda\left(\matF(\vtheta^\star)\right) \\[10pt]
    \displaystyle \lim_{n\to\infty} n\cdot\E\left[\XnormS{\hat{l}_\kappa -l^\star_\kappa}{L_2(p_\vz)}\right] 
    & \leq \rank^{(2)}_\matLambda\left(\matF(\vtheta^\star)\right)
    \end{array}
    \]
    where $\hat{l}_\kappa \coloneqq l_\kappa(\hat{\vtheta};\cdot)$ and $l^\star_\kappa \coloneqq l_\kappa(\vtheta^\star;\cdot)$.
\end{proposition}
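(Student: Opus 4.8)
The plan is to feed the regularized information matrices at $\vtheta^\star$ into the three asymptotic identities already recorded in the excerpt — the Shibata formula~\eqref{eq:shibata} for $\E[\mathrm{Gap}]$, the A-optimality formula~\eqref{eq:theta-error} for $\E[\TnormS{\hat{\vtheta}-\vtheta^\star}]$, and the $L_2$ formula~\eqref{eq:l2_limit} — and then enlarge the matrices appearing there while tracking that the traces only increase. Two algebraic identities at $\vtheta^\star$ do all the work. First, because the Hessian of the Tikhonov penalty $\tfrac12\vtheta^\T\matLambda\vtheta$ is the constant $\matLambda$, we have $\matH_\matLambda(\vtheta^\star)=\matH(\vtheta^\star)+\matLambda$; and Proposition~\ref{prop:calibrated_model} applies here (the model is an exponential family neural network, calibrated at $\vtheta^\star$, and the noncorrelations~\eqref{eq:uncor_err} hold), giving $\matH(\vtheta^\star)=\matF(\vtheta^\star)=\matC(\vtheta^\star)$, hence $\matH_\matLambda(\vtheta^\star)=\matF(\vtheta^\star)+\matLambda$. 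Second, I would invoke the population first-order optimality condition for $\vtheta^\star=\arg\min_\vtheta\E_\vz[l_\kappa(\vtheta;\vz)]$, namely $\E_\vz[\nabla_\vtheta l_0(\vtheta^\star;\vz)]=-\matLambda\vtheta^\star$; substituting $\nabla_\vtheta l_\kappa(\vtheta^\star;\vz)=\nabla_\vtheta l_0(\vtheta^\star;\vz)+\matLambda\vtheta^\star$ into the (uncentered) definition of $\matC_\matLambda$ makes the cross terms cancel a rank-one piece, leaving
\[
\matC_\matLambda(\vtheta^\star)=\matC(\vtheta^\star)-(\matLambda\vtheta^\star)(\matLambda\vtheta^\star)^\T\preceq\matC(\vtheta^\star)=\matF(\vtheta^\star).
\]

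With these two facts in hand, set $\matM\coloneqq(\matF(\vtheta^\star)+\matLambda)^{-1}\succ 0$. Conjugation invariance of the Loewner order gives the chain $0\preceq\matM^{1/2}\matC_\matLambda(\vtheta^\star)\matM^{1/2}\preceq\matM^{1/2}\matF(\vtheta^\star)\matM^{1/2}\preceq\matM^{1/2}(\matF(\vtheta^\star)+\matLambda)\matM^{1/2}=\matI$. For the first bound, the right-hand side of~\eqref{eq:theta-error} becomes $\trace(\matC_\matLambda(\vtheta^\star)\matM^2)=\trace\bigl(\matM^{1/2}\matC_\matLambda(\vtheta^\star)\matM^{1/2}\cdot\matM\bigr)\le\trace(\matM)=\trace((\matF(\vtheta^\star)+\matLambda)^{-1})$, using $\trace(\matA\matN)\le\trace(\matB\matN)$ whenever $\matA\preceq\matB$ and $\matN\succeq 0$. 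For the second bound, the right-hand side of~\eqref{eq:shibata} becomes $\trace(\matC_\matLambda(\vtheta^\star)\matM)\le\trace(\matF(\vtheta^\star)\matM)=\trace(\matP_\matLambda(\matF(\vtheta^\star)))=\rank_\matLambda(\matF(\vtheta^\star))$. For the third bound, one cyclic rearrangement turns the right-hand side of~\eqref{eq:l2_limit} into $\trace\bigl((\matM^{1/2}\matC_\matLambda(\vtheta^\star)\matM^{1/2})^2\bigr)\le\trace\bigl((\matM^{1/2}\matF(\vtheta^\star)\matM^{1/2})^2\bigr)=\trace(\matP_\matLambda(\matF(\vtheta^\star))^2)=\rank^{(2)}_\matLambda(\matF(\vtheta^\star))$, where the inequality is the elementary fact that $\trace(\matX^2)\le\trace(\matY^2)$ for $0\preceq\matX\preceq\matY$ (which follows from $\trace(\matY^2)-\trace(\matX^2)=\trace(\matY(\matY-\matX))+\trace(\matX(\matY-\matX))\ge 0$).

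I expect the main obstacle to be the careful computation of $\matC_\matLambda(\vtheta^\star)$: one must remember that $\matC_\kappa$ is the \emph{uncentered} second moment of the regularized gradient rather than a centered covariance, and then notice that the stationarity condition at the population optimum contributes precisely the mean $-\matLambda\vtheta^\star$ needed to collapse the cross terms into the rank-one correction $-(\matLambda\vtheta^\star)(\matLambda\vtheta^\star)^\T$. Everything downstream is routine: cyclic trace manipulations, conjugation invariance of $\preceq$, the two elementary trace-monotonicity facts above, and unpacking the definitions of $\matP_\matLambda$, $\rank_\matLambda$, and $\rank^{(2)}_\matLambda$. A minor point worth verifying is that the regularity hypotheses behind~\eqref{eq:shibata},~\eqref{eq:theta-error}, and~\eqref{eq:l2_limit} hold in this setting — chiefly the asymptotic normality of $\sqrt{n}(\hat{\vtheta}-\vtheta^\star)$ and the invertibility of $\matH_\matLambda(\vtheta^\star)$, the latter being automatic since $\matLambda\succ 0$ forces $\matF(\vtheta^\star)+\matLambda\succ 0$.
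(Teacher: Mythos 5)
Your proposal is correct and follows essentially the same route as the paper: it rederives the identity $\matC_\matLambda(\vtheta^\star)=\matC(\vtheta^\star)-\matLambda\vtheta^\star{\vtheta^\star}^\T\matLambda$ from population stationarity (the paper's Proposition~\ref{prop:C_Lambda}), invokes Proposition~\ref{prop:calibrated_model} to identify $\matC=\matF=\matH$, and then drops the negative rank-one term before bounding the three traces. Your symmetrized conjugation by $\matM^{1/2}$ is just a direct rephrasing of the paper's Lemmas~\ref{lemma:ABA} and~\ref{lemma:ABC}, so there is no substantive difference.
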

\begin{proof}
\label{app:proof_asymptotic_bounds}
For the first inequality, the calibration assumption with Proposition~\ref{prop:calibrated_model}, Eq.~\eqref{eq:C_Lambda} and Eq.~\eqref{eq:P_Lambda} give: 
\begin{align}
\nonumber
\matH_\matLambda(\vtheta^\star)^{-1}\matC_\matLambda(\vtheta^\star)\matH_\matLambda(\vtheta^\star)^{-1}
\nonumber
&=(\matF(\vtheta^\star)+\matLambda)^{-1}(\matF(\vtheta^\star)-\matLambda \vtheta^\star {\vtheta^\star}^\T \matLambda)(\matF(\vtheta^\star)+\matLambda)^{-1}\\
\nonumber
&\preceq (\matF(\vtheta^\star)+\matLambda)^{-1}\matF(\vtheta^\star)(\matF(\vtheta^\star)+\matLambda)^{-1}\\
&\preceq\  (\matF(\vtheta^\star)+\matLambda)^{-1}.
\end{align}
where the second inequality holds due to Lemma \ref{lemma:ABA}. Now take trace on both sides and recall Eq.~\eqref{eq:theta-error}. 

For the second inequality, 
recall that the exact expression for the limit is $\trace(\matC_\matLambda(\vtheta^\star)\matH_\matLambda(\vtheta^\star)^{-1})$. 
Now,
\begin{align*}    \trace(\matC_\matLambda(\vtheta^\star)\matH_\matLambda(\vtheta^\star)^{-1}) 
    &= \trace((\matF(\vtheta^\star)-\matLambda \vtheta^\star {\vtheta^\star}^\T \matLambda)(\matF(\vtheta^\star)+\matLambda)^{-1})\\
    &\le \trace(\matF(\vtheta^\star)(\matF(\vtheta^\star)+\matLambda)^{-1})\\
    &= \rank_\matLambda(\matF(\vtheta^\star))
\end{align*} 
where the inequality follows from the fact the trace of the product of positive semidefinite matrix and positive definite matrix is always non-negative due to cyclicity of the trace. 

The third inequality follows now from recalling that the exact expression of the limit is $\trace((\matC_\matLambda(\vtheta^\star)\matH_\matLambda(\vtheta^\star)^{-1})^2)$. Using Lemma~\ref{lemma:ABC} in the appendix, we have:
\begin{align*}    \trace((\matC_\matLambda(\vtheta^\star)\matH_\matLambda(\vtheta^\star)^{-1})^2) 
    &= \trace(((\matF(\vtheta^\star)-\matLambda \vtheta^\star {\vtheta^\star}^\T \matLambda)(\matF(\vtheta^\star)+\matLambda)^{-1})^2)\\
    &\le \trace((\matF(\vtheta^\star)(\matF(\vtheta^\star)+\matLambda)^{-1})^2)\\
    &= \rank^{(2)}_\matLambda(\matF(\vtheta^\star))
\end{align*}
\end{proof}

If we further assume that $\Tnorm{\vtheta^\star}$ is bounded  (e.g., if $\matTheta$ is bounded) then we can show that the above inequalities are tight in the following way:
\begin{proposition}
    \label{prop:limit_with_o}
    For a calibrated exponential family neural network model with Tikhonov regularization, if \eqref{eq:uncor_err} holds and $\Tnorm{\vtheta^\star}$ is bounded as $\matLambda\to0$, then we have
    \[
    \begin{array}{ll}
        \lim_{n\to\infty} n\cdot\E\left[\mathrm{Gap}\right] 
        &= \rank_\matLambda(\matF(\vtheta^\star))  + O(\|\matLambda\|_2)\\
        \lim_{n\to\infty} n\cdot\E\left[\XnormS{\hat{l}_\kappa -l^\star_\kappa}{L_2(p_\vz)}\right] 
        &= \rank^{(2)}_\matLambda(\matF(\vtheta^\star)) + O(\|\matLambda\|_2)
    \end{array}
    \]
\end{proposition}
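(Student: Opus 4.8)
The plan is to substitute the Tikhonov-regularized information matrices into the asymptotic identities \eqref{eq:shibata} and \eqref{eq:l2_limit}, use Proposition~\ref{prop:calibrated_model} to collapse $\matH(\vtheta^\star)$, $\matF(\vtheta^\star)$ and $\matC(\vtheta^\star)$ into a single matrix, and then isolate the rank-one correction that Proposition~\ref{prop:asymptotic_bounds} simply discards to obtain its upper bounds; the boundedness hypothesis will force this correction to be $O(\|\matLambda\|_2)$. For $\kappa(\vtheta)=\tfrac12\TnormS{\matLambda^{1/2}\vtheta}$ one has $\nabla_\vtheta l_\kappa(\vtheta;\vz)=\nabla_\vtheta l_0(\vtheta;\vz)+\matLambda\vtheta$ and $\nabla^2_\vtheta l_\kappa=\nabla^2_\vtheta l_0+\matLambda$, so $\matH_\matLambda(\vtheta^\star)=\matH(\vtheta^\star)+\matLambda$, which by Proposition~\ref{prop:calibrated_model} equals $\matF(\vtheta^\star)+\matLambda$.

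The first step I would carry out is to record the covariance correction. The first-order optimality condition for $\vtheta^\star=\arg\min_\vtheta\E_\vz[l_\kappa(\vtheta;\vz)]$ gives $\E_\vz[\nabla_\vtheta l_0(\vtheta^\star;\vz)]=-\matLambda\vtheta^\star$, so $\nabla_\vtheta l_\kappa(\vtheta^\star;\vz)$ has mean zero and
\[
\matC_\matLambda(\vtheta^\star)=\E_\vz\!\left[(\nabla_\vtheta l_0(\vtheta^\star;\vz)+\matLambda\vtheta^\star)(\nabla_\vtheta l_0(\vtheta^\star;\vz)+\matLambda\vtheta^\star)^\T\right]=\matC(\vtheta^\star)-(\matLambda\vtheta^\star)(\matLambda\vtheta^\star)^\T=\matF(\vtheta^\star)-\vv\vv^\T,
\]
where $\vv\coloneqq\matLambda\vtheta^\star$ and the last equality uses Proposition~\ref{prop:calibrated_model} again. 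Writing $\matF\coloneqq\matF(\vtheta^\star)$, $\matP\coloneqq\matP_\matLambda(\matF)=\matF(\matF+\matLambda)^{-1}$ and $\vw\coloneqq(\matF+\matLambda)^{-1}\vv$, plugging $\matC_\matLambda(\vtheta^\star)$ and $\matH_\matLambda(\vtheta^\star)=\matF+\matLambda$ into \eqref{eq:shibata} gives
\[
\lim_{n\to\infty}n\cdot\E[\mathrm{Gap}]=\trace\!\left((\matF-\vv\vv^\T)(\matF+\matLambda)^{-1}\right)=\rank_\matLambda(\matF)-\vv^\T(\matF+\matLambda)^{-1}\vv,
\]
while using $(\matF-\vv\vv^\T)(\matF+\matLambda)^{-1}=\matP-\vv\vw^\T$ and plugging into \eqref{eq:l2_limit} gives
\[
\lim_{n\to\infty}n\cdot\E\!\left[\XnormS{\hat{l}_\kappa-l^\star_\kappa}{L_2(p_\vz)}\right]=\trace\!\left((\matP-\vv\vw^\T)^2\right)=\rank^{(2)}_\matLambda(\matF)-2\,\vv^\T(\matF+\matLambda)^{-1}\matF(\matF+\matLambda)^{-1}\vv+\big(\vv^\T(\matF+\matLambda)^{-1}\vv\big)^2.
\]

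The crux is bounding the correction terms, which is a short positive-semidefinite computation: since $\matF+\matLambda\succeq\matLambda\succ0$ we have $(\matF+\matLambda)^{-1}\preceq\matLambda^{-1}$ and likewise $(\matF+\matLambda)^{-1}\matF(\matF+\matLambda)^{-1}\preceq(\matF+\matLambda)^{-1}\preceq\matLambda^{-1}$, so each correction term is controlled by a power of
\[
\vv^\T\matLambda^{-1}\vv=(\matLambda\vtheta^\star)^\T\matLambda^{-1}(\matLambda\vtheta^\star)=(\vtheta^\star)^\T\matLambda\,\vtheta^\star=\XnormS{\matLambda^{1/2}\vtheta^\star}{2}\le\|\matLambda\|_2\cdot\TnormS{\vtheta^\star},
\]
which is $O(\|\matLambda\|_2)$ once $\Tnorm{\vtheta^\star}$ is assumed bounded as $\matLambda\to0$ (the squared term being even $O(\|\matLambda\|_2^2)$); this yields both claimed identities. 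I expect the main obstacle to be purely bookkeeping: fixing the sign and rank-one form of the covariance correction (the only place where the regularizer interacts nontrivially with $\matC$), and observing that the natural $\matLambda^{-1}$-weighted bound on $\vv=\matLambda\vtheta^\star$ collapses to $\XnormS{\matLambda^{1/2}\vtheta^\star}{2}$, i.e.\ twice the value of the regularizer at $\vtheta^\star$, which the boundedness assumption is exactly tailored to control. No analytic input beyond \eqref{eq:shibata}, \eqref{eq:l2_limit} and Proposition~\ref{prop:calibrated_model} enters.
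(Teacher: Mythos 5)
Your proof is correct and follows essentially the same route as the paper's: compute $\matC_\matLambda(\vtheta^\star)=\matC(\vtheta^\star)-\matLambda\vtheta^\star{\vtheta^\star}^\T\matLambda$ from the first-order optimality condition, collapse $\matC=\matF=\matH$ via calibration, and bound the resulting rank-one corrections to $\rank_\matLambda(\matF)$ and $\rank^{(2)}_\matLambda(\matF)$ by $O(\Tnorm{\matLambda})$ using the boundedness of $\vtheta^\star$. Your way of controlling the corrections, via $(\matF+\matLambda)^{-1}\preceq\matLambda^{-1}$ so that everything reduces to $(\vtheta^\star)^\T\matLambda\vtheta^\star\le\Tnorm{\matLambda}\TnormS{\vtheta^\star}$, is a marginally cleaner bookkeeping than the paper's operator-norm bounds on products of soft projections, but the argument is the same.
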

\begin{proof}
\label{app:proof_limit_with_o}
 We  prove the second equality; the first equation can be proved similarly. We have:
    \begin{align*}
            &\lim_{n\to\infty} n\cdot\E\left[\XnormS{\hat{l}_\kappa -l^\star_\kappa}{L_2(p_\vz)}\right]\\ &=\trace((\matC_\matLambda(\vtheta^\star)\matH_\matLambda(\vtheta^\star)^{-1})^2) \\
        &= \trace((\matF(\vtheta^\star)-\matLambda \vtheta^\star {\vtheta^\star}^\top \matLambda)(\matF(\vtheta^\star)+\matLambda)^{-1})^2)\\
        &= \trace((\matP_\matLambda(\matF(\vtheta^\star)) -\matLambda \vtheta^\star {\vtheta^\star}^\T \matP^\perp_\matLambda(\matF(\vtheta^\star)))^2)\\
        &=\rank^{(2)}_\matLambda(\matF(\vtheta^\star))-2\trace(\matP_\matLambda(\matF(\vtheta^\star))\matLambda \vtheta^\star {\vtheta^\star}^\T \matP^\perp_\matLambda(\matF(\vtheta^\star)))+\trace((\matLambda \vtheta^\star {\vtheta^\star}^\T \matP^\perp_\matLambda(\matF(\vtheta^\star)))^2)
    \end{align*}
    The second equation is because the model is a calibrated exponential family neural network and Eqs.~\eqref{eq:uncor_err} hold. Let $M\in\R$ be such that $\TnormS{\vtheta^\star}\leq M$ for sufficiently small $\matLambda$. Since
        $\Tnorm{\matP_\matLambda(\matS)}< 1$ and $\Tnorm{\matP^{\perp}_\matLambda(\matS)} \le 1$, for any positive semidefinite $\matS$ and sufficiently small $\matLambda$ we get:
    \begin{align*}
    \trace(\matP_\matLambda(\matF(\vtheta^\star))\matLambda \vtheta^\star {\vtheta^\star}^\T \matP^\perp_\matLambda(\matF(\vtheta^\star))) &= {\vtheta^\star}^\T \matP^\perp_\matLambda(\matF(\vtheta^\star))\matP_\matLambda(\matF(\vtheta^\star))\matLambda \vtheta^\star \\
    &\leq M \cdot \Tnorm{\matP^\perp_\matLambda(\matF(\vtheta^\star))\matP_\matLambda(\matF(\vtheta^\star))\matLambda}\\
    &\leq M \cdot \Tnorm{\matLambda}
    \end{align*}
    where the first inequality follows from the fact $\matP^\perp_\matLambda(\matF(\vtheta^\star))\matP_\matLambda(\matF(\vtheta^\star))\matLambda$ is a symmetric positive definite matrix. We also have
    \begin{align*}
        \trace((\matLambda \vtheta^\star {\vtheta^\star}^\T \matP^\perp_\matLambda(\matF(\vtheta^\star)))^2) &= 
        \trace(\matLambda \vtheta^\star {\vtheta^\star}^\T \matP^\perp_\matLambda(\matF(\vtheta^\star))\matLambda \vtheta^\star {\vtheta^\star}^\T \matP^\perp_\matLambda(\matF(\vtheta^\star)))\\
        &=  {\vtheta^\star}^\T \matP^\perp_\matLambda(\matF(\vtheta^\star))\matLambda \vtheta^\star \cdot{\vtheta^\star}^\T \matP^\perp_\matLambda(\matF(\vtheta^\star))\matLambda \vtheta^\star \\
        &= {\vtheta^\star}^\T \matLambda (\matF(\vtheta^\star) +\matLambda)^{-1}\matLambda \vtheta^\star \cdot 
        {\vtheta^\star}^\T \matF(\vtheta^\star) (\matF(\vtheta^\star) +\matLambda)^{-1}\matLambda \vtheta^\star\\ 
        & = {\vtheta^\star}^\T \matLambda (\matF(\vtheta^\star) +\matLambda)^{-1}\matLambda \vtheta^\star \cdot( {\vtheta^\star}^\T \matLambda {\vtheta^\star} - {\vtheta^\star}^\T \matLambda (\matF(\vtheta^\star) +\matLambda)^{-1}\matLambda \vtheta^\star) \\
        & \geq - ({\vtheta^\star}^\T \matLambda (\matF(\vtheta^\star) +\matLambda)^{-1}\matLambda \vtheta^\star)^2 \\
        & \geq -M^2 \cdot \TnormS{\matLambda (\matF(\vtheta^\star) +\matLambda)^{-1}\matLambda} \\
        & \geq -M^2 \cdot \TnormS{\matLambda}
    \end{align*}
    Combining the above, we have
    \begin{equation*}
        \rank^{(2)}_\matLambda(\matF(\vtheta^\star))- 2M\Tnorm{\matLambda}-M^2\TnormS{\matLambda} \leq  \lim_{n\to\infty} n\cdot\E\left[\XnormS{\hat{l}_\kappa -l^\star_\kappa}{L_2(p_\vz)}\right] \leq \rank^{(2)}_\matLambda(\matF(\vtheta^\star)) 
    \end{equation*}
    Since we are considering the asymptotics as $\matLambda \to 0$, this establishes the result. 
\end{proof}

Next, we consider what happens if we take $\matLambda\to 0$:
\begin{proposition}
\label{prop:limit_with_lambda}
    Let $\matP_\perp^\star$ be the projection on the null space of $\matF(\vtheta^\star)$. For a calibrated exponential family neural network model, if Eqs.~\eqref{eq:uncor_err} hold, then we have: 
    \begin{equation*}
        \lim_{\matLambda\to 0}\lim_{\substack{n\to\infty }} n\cdot\E\left[\TnormS{\hat{\vtheta} - \vtheta^\star} \right]
        = \trace(\matF(\vtheta^\star)^\pinv) - \|{\matP_\perp^\star}\vtheta^\star\|^2_2 
    \end{equation*}   
    Further assuming that $\Tnorm{\vtheta^\star}$ is bounded, then: 
    \[
    \begin{array}{ll}      
        \lim_{\matLambda\to 0}\lim_{\substack{n\to\infty }}  n\cdot\E\left[\mathrm{Gap}\right] &=\rank(\matF(\vtheta^\star)) \\
        \lim_{\matLambda\to 0}\lim_{\substack{n\to\infty }} n\cdot\E\left[\XnormS{\hat{l}_\kappa -l^\star_\kappa}{L_2(p_\vz)}\right] &=
        \rank(\matF(\vtheta^\star))
    \end{array}
    \]
\end{proposition}
\begin{proof}
For the first equation:
    \begin{align*}    \matH_\matLambda(\vtheta^\star)^{-1}\matC_\matLambda(\vtheta^\star)\matH_\matLambda(\vtheta^\star)^{-1}
    &=(\matF(\vtheta^\star)+\matLambda)^{-1}(\matF(\vtheta^\star)-\matLambda \vtheta^\star {\vtheta^\star}^\T \matLambda)(\matF(\vtheta^\star)+\matLambda)^{-1}\\
    &=(\matF(\vtheta^\star)+\matLambda)^{-1}\matP_\matLambda(\matF(\vtheta^\star))-\matP^\perp_\matLambda(\matF(\vtheta^\star))^\T{\vtheta^\star}{\vtheta^\star}^\T\matP^\perp_\matLambda(\matF(\vtheta^\star)).
    \end{align*}
    Now take trace and $\matLambda \to 0$ to get $\trace(\matF(\vtheta^\star)^\pinv) - \|\matP_\perp^\star\vtheta^\star\|^2_2$ and recall Eq.~\eqref{eq:theta-error}. The two other equations are a direct corollary of Proposition~\ref{prop:limit_with_o}.
\end{proof}

An interesting observation is that the parameter estimation error behaves opposite to generalization: while
\(
\matF \mapsto \rank_\matLambda(\matF)
\) and \(
\matF \mapsto \rank^{(2)}_\matLambda(\matF)
\)
are concave monotone increasing function on the positive semidefinite cone,
$
\matF \mapsto \trace((\matF+\matLambda)^{-1})
$
is a convex monotone decreasing function on the positive semidefinite cone. In other words, the more flat the model is, the higher the expected parameter estimation error is, but lower is the expected prediction error. An implication of this observation is that while in supervised learning we want flat minima, in transfer learning the situation depends a lot on the similarity between the FIMs of the source and target tasks. 

\section{Non Calibrated Models}

When the model is not calibrated at $\vtheta^\star$, then there is no equality between unregularized information matrices, and we do not have an approximation of the gap involving only the FIM. We still have:
\begin{proposition}
    \label{prop:limit_with_o_not_calibrated}
    Assuming that $\Tnorm{\vtheta^\star}$ is bounded as $\matLambda\to 0$, we have
    \begin{equation*}        
        \lim_{n\to\infty} n\cdot\E\left[\mathrm{Gap}\right] 
        = \trace(\matC(\vtheta^\star)\matH_\matLambda(\vtheta^\star)^{-1})  + O(\Tnorm{\matLambda})        
    \end{equation*}
\end{proposition}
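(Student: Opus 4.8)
The plan is to reduce everything to Shibata's asymptotic formula, Eq.~\eqref{eq:shibata}, instantiated with the Tikhonov penalty $\kappa(\vtheta)=\frac12\TnormS{\matLambda^{\nicefrac{1}{2}}\vtheta}$, which gives $\lim_{n\to\infty} n\cdot\E[\mathrm{Gap}] = \trace(\matC_\matLambda(\vtheta^\star)\matH_\matLambda(\vtheta^\star)^{-1})$. The whole task then becomes: rewrite $\matC_\matLambda(\vtheta^\star)$ and $\matH_\matLambda(\vtheta^\star)$ in terms of the unregularized matrices $\matC(\vtheta^\star),\matH(\vtheta^\star)$ and show that replacing the former by the latter costs only $O(\Tnorm{\matLambda})$.

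The Hessian part is exact and immediate: the Tikhonov term has constant Hessian $\matLambda$, so by linearity of the expectation $\matH_\matLambda(\vtheta^\star)=\matH(\vtheta^\star)+\matLambda$. For the gradient covariance I would use first-order stationarity of the regularized population risk at $\vtheta^\star$, i.e.\ $\E_\vz[\nabla_\vtheta l_0(\vtheta^\star;\vz)]=-\matLambda\vtheta^\star$. Writing $\nabla_\vtheta l_\kappa(\vtheta^\star;\vz)=\nabla_\vtheta l_0(\vtheta^\star;\vz)+\matLambda\vtheta^\star$ and expanding the outer product inside $\matC_\matLambda(\vtheta^\star)=\E_\vz[\nabla_\vtheta l_\kappa\,\nabla_\vtheta l_\kappa^\T]$, the two cross terms each equal $-\matLambda\vtheta^\star\vtheta^{\star\T}\matLambda$ and the constant term equals $+\matLambda\vtheta^\star\vtheta^{\star\T}\matLambda$, so they collapse to the clean identity $\matC_\matLambda(\vtheta^\star)=\matC(\vtheta^\star)-\matLambda\vtheta^\star\vtheta^{\star\T}\matLambda$.

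Substituting both identities and using linearity and cyclicity of the trace gives
\[
\trace(\matC_\matLambda(\vtheta^\star)\matH_\matLambda(\vtheta^\star)^{-1}) = \trace(\matC(\vtheta^\star)\matH_\matLambda(\vtheta^\star)^{-1}) - \vtheta^{\star\T}\matLambda(\matH(\vtheta^\star)+\matLambda)^{-1}\matLambda\vtheta^\star ,
\]
and it remains to bound the scalar error term. Here I would invoke the second-order necessary optimality condition: since $\Tnorm{\vtheta^\star}$ stays bounded, $\vtheta^\star$ approaches a minimizer of the unregularized population risk as $\matLambda\to0$, at which $\matH(\vtheta^\star)\succeq 0$; hence $\matH(\vtheta^\star)+\matLambda\succeq\matLambda$ and therefore $\matLambda(\matH(\vtheta^\star)+\matLambda)^{-1}\matLambda\preceq\matLambda$ in the Loewner order. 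Consequently the error term is at most $\Tnorm{\matLambda}\cdot\TnormS{\vtheta^\star}=O(\Tnorm{\matLambda})$, which combined with the display and Eq.~\eqref{eq:shibata} yields the claim.

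The main obstacle is making the $O(\Tnorm{\matLambda})$ estimate genuinely uniform in $\matLambda$: at a strict local minimum of the \emph{regularized} problem one only knows $\matH(\vtheta^\star)\succ-\matLambda$, not $\matH(\vtheta^\star)\succeq 0$, so one must rule out that a small negative part of $\matH(\vtheta^\star)$ conspires with $\matLambda$ to make $(\matH(\vtheta^\star)+\matLambda)^{-1}$ blow up faster than $\Tnorm{\matLambda}^{-1}$ in the directions excited by $\matLambda\vtheta^\star$. Since $\vtheta^\star$ converges to an unregularized minimizer, the negative part of $\matH(\vtheta^\star)$ has norm $o(1)$, so a quantitative version of the argument still gives $\matLambda(\matH(\vtheta^\star)+\matLambda)^{-1}\matLambda\preceq(1+o(1))\matLambda$, preserving the bound; nailing down this quantitative statement is the technical heart of the proof. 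A secondary point of care is purely bookkeeping: Eq.~\eqref{eq:shibata} is itself an $n\to\infty$ limit, so the $O(\Tnorm{\matLambda})$ term should be understood as attached to the limiting quantity, with no interchange of the $n\to\infty$ and $\matLambda\to0$ processes.
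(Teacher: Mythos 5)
Your proof follows essentially the same route as the paper's: invoke Shibata's formula for the regularized loss, substitute the identities $\matH_\matLambda(\vtheta^\star)=\matH(\vtheta^\star)+\matLambda$ and $\matC_\matLambda(\vtheta^\star)=\matC(\vtheta^\star)-\matLambda\vtheta^\star{\vtheta^\star}^\T\matLambda$ (the paper's Proposition on $\matC_\matLambda$, derived exactly as you do from first-order stationarity), and bound the resulting rank-one correction $\vtheta^{\star\T}\matLambda\matH_\matLambda(\vtheta^\star)^{-1}\matLambda\vtheta^\star$ by $O(\Tnorm{\matLambda})$ using the boundedness of $\Tnorm{\vtheta^\star}$. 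If anything, you are more careful than the paper's three-line proof, which asserts the final $O(\Tnorm{\matLambda})$ step without addressing the positive-semidefiniteness of $\matH(\vtheta^\star)$ that your quantitative remark correctly flags as the only delicate point.
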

\begin{proof}
    The proof is the same as in Proposition~\ref{prop:limit_with_o}, we just can't replace $\matC(\vtheta^\star)$ with $\matF(\vtheta^\star)$ and $\matH(\vtheta^\star)$ with $\matF(\vtheta^\star)$.
\end{proof}

However, it has long been observed that in neural networks the FIM at local minima is a good approximation of the Hessian at that point \citep{martens2020new, kunstner2019limitations, thomas2020interplay}. Since $\vtheta^\star$ is a local minima, this observation applies. As for the gradient covariance,  \citet{thomas2020interplay} empirically observed that in  classification, during training process the gradient covariance $\matC(\vtheta)$ is approximately proportional to the FIM $\matF(\vtheta)$, that is
\[
\matC(\vtheta_t) \approx \alpha_t \matF(\vtheta_t)
\]
where $t$ is an iteration index, and $\alpha_t \in \R$ for $t=0,1,2,\dots$. As a consequence of this approximation, \citet{thomas2020interplay,naganuma2022takeuchis} approximated the asymptotic expected gap as:
\begin{align}
\lim_{n\to\infty} n\cdot \E\left[\mathrm{Gap}\right] &= \trace(\matC(\vtheta^\star)\matH(\vtheta^\star)^{-1}) \nonumber \\ 
&\approx \trace(\matC(\vtheta^\star)\matF(\vtheta^\star)^{-1}) \nonumber \\ 
&\approx 
\frac{\trace(\matC(\vtheta^\star))}{\trace(\matF(\vtheta^\star))}. \label{eq:thomas}
\end{align}
However, this approximation is deficient in two aspects: (1) it does not take regularization into account, and (2) even if we take $\matLambda \to 0$ the approximation should actually be: 
\begin{equation}
\label{eq:correct-approx}
\lim_{\matLambda\to0}\lim_{n\to\infty} n\cdot \E\left[\mathrm{Gap}\right] \approx \frac{\trace(\matC(\vtheta^\star))}{\trace(\matF(\vtheta^\star))}\cdot\rank(\matF(\vtheta^\star)) 
\end{equation}
To see this, first approximate $\matC(\vtheta^\star) \approx (\trace(\matC(\vtheta^\star))/\trace(\matF(\vtheta^\star))) \cdot \matF(\vtheta^\star)$, and by Proposition~\ref{prop:limit_with_o_not_calibrated}: 
\begin{align*}
\lim_{n\to\infty} n\cdot \E\left[\mathrm{Gap}\right] &= \trace(\matC(\vtheta^\star)\matH_\matLambda(\vtheta^\star)^{-1}) + O(\|\matLambda\|_2) \\
&\approx \trace(\matC(\vtheta^\star)(\matF(\vtheta^\star)+\matLambda)^{-1})  + O(\|\matLambda\|_2)  \\
&\approx \frac{\trace(\matC(\vtheta^\star))}{\trace(\matF(\vtheta^\star))} \cdot \rank_\matLambda(\matF(\vtheta^\star)) + O(\Tnorm{\matLambda})
\end{align*}
Taking $\matLambda\to 0$ we obtain Eq.~\eqref{eq:correct-approx}.

In general, we propose to approximate
\begin{equation}
   \lim_{n\to\infty} n\cdot \E\left[\mathrm{Gap}\right] \approx  \frac{\trace(\matC(\vtheta^\star))}{\trace(\matF(\vtheta^\star))} \cdot \rank_\matLambda(\matF(\vtheta^\star)) \label{eq:our-gap}
\end{equation}

\begin{remark}
    Due to numerical stability considerations, \citet{thomas2020interplay} proposes to use $\tilde{\matF}(\vtheta^\star)$ in lieu of $\matF(\vtheta^\star)$ in Eq.~\eqref{eq:thomas}, where $\tilde{\matF}(\vtheta^\star)$ is obtained from $\matF(\vtheta^\star)$ by truncating singular values which are less than $0.001\cdot\Tnorm{\matF(\vtheta^\star)}$. Eq.~\eqref{eq:our-gap} is a measure based on the actual weight decay $\matLambda$ used, instead of the heuristic $0.001$. 
\end{remark}

Figure~\ref{fig:Gap-estimation-using} shows the state-of-the-art result using Eq. \eqref{eq:our-gap}  compares to naive
implementation of regularized TIC and \citet{thomas2020interplay}'s approach.

{%
\setlength{\overfullrule}{0pt}%
\begin{figure*}
    \centering
    \hspace{-2cm}%
    \begin{minipage}[t]{0.15\textwidth}%
        \includegraphics[scale=0.4]{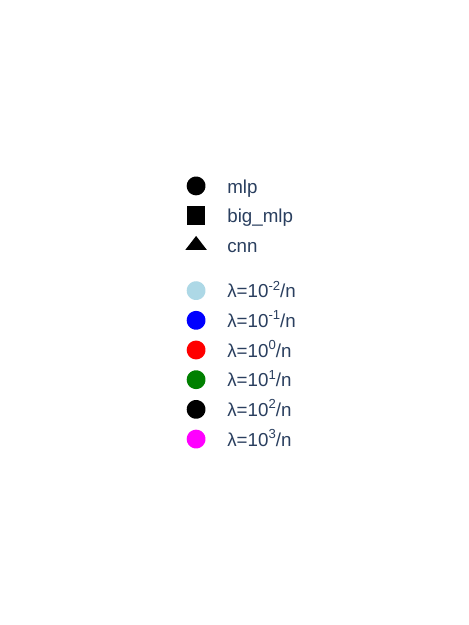}%
    \end{minipage}%
    \hspace{-0.2cm}%
    \begin{minipage}[t]{0.26\textwidth}%
        \includegraphics[scale=0.49]{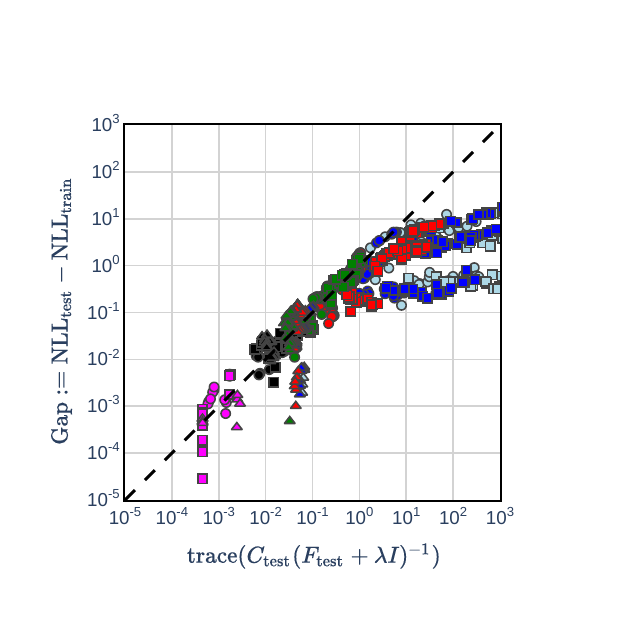}%
    \end{minipage}%
    \hspace{0.6cm}%
    \begin{minipage}[t]{0.26\textwidth}%
        \includegraphics[scale=0.49]{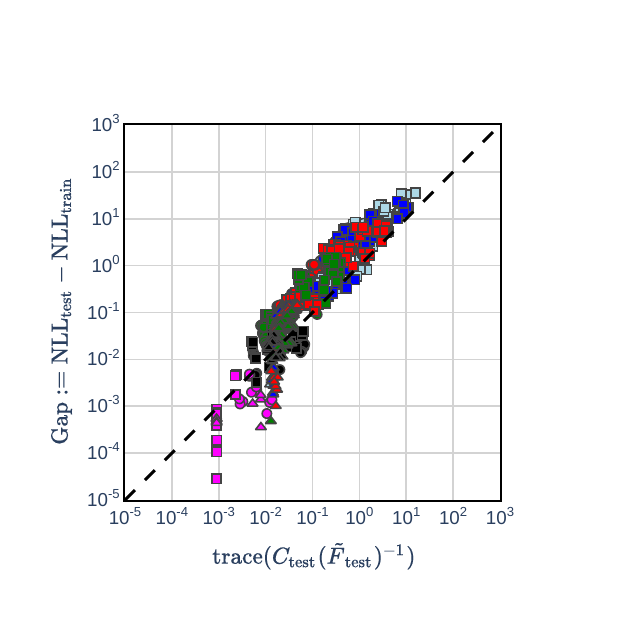}%
    \end{minipage}%
    \hspace{0.6cm}%
    \begin{minipage}[t]{0.26\textwidth}%
        \includegraphics[scale=0.49]{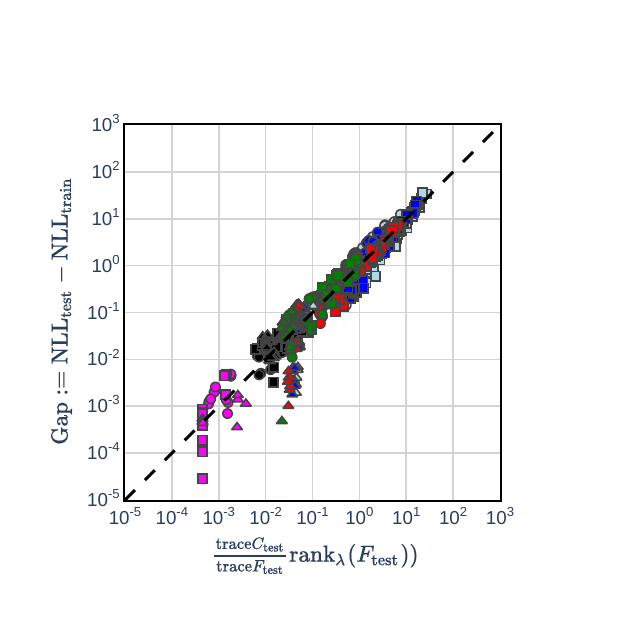}%
    \end{minipage}%
\caption{Gap estimation with $\matC(\hat{\vtheta})$
and $\matF(\hat{\vtheta})$ computed on the test data. The leftmost plot shows the regularized TIC; the middle plot shows the gap estimator based on \citet{thomas2020interplay}; and the rightmost plot shows our approach.}
\label{fig:Gap-estimation-using}
\end{figure*}

\subsection{On The Relation Between \texorpdfstring{$\matC$}{C} and \texorpdfstring{$\matF$}{F}}
\label{sec:on_The_relation}
As mentioned above, \citet{thomas2020interplay} empirically observed that during training $\matC(\vtheta)$ is proportional to $\matF(\vtheta)$. In this section, we shed additional light on this phenomenon in classification. Throughout this section we assume that $\vy$ is a one-hot encoding of the class, $\netN_\vtheta(\vx)$ is a vector of class probabilities (so it has the same dimensions as $\vy$), and $q(\cdot|\netN_\vtheta(\vx))$ is the categorical distribution.

Recall the symbols defined by Eq.~\eqref{eq:e_theta} and let 
\[
\matB_\vtheta = \ve_\vtheta \ve_\vtheta^\T
\]
Our experiments further refined \citet{thomas2020interplay}'s observation. 
We found (empirically) that (see Section~\ref{app:experimental_results_CF}) $\matJ_{\hat{\vtheta}}$ is uncorrelated with $\matB_{\hat{\vtheta}}$ and $\matSigma_{\hat{\vtheta}}$ in the sense that 
\begin{align*}
   \matC(\hat{\vtheta}) &= \E_\vz\!\left[\matJ_{\hat{\vtheta}}^\T \matB_{\hat{\vtheta}}\matJ_{\hat{\vtheta}}\right] 
   \;\approx\; \E_\vz\!\left[\matJ_{\hat{\vtheta}}^\T \,\E_\vz\!\left[\matB_{\hat{\vtheta}}\right]\matJ_{\hat{\vtheta}}\right], \\
   \matF(\hat{\vtheta}) &= \E_\vz\!\left[\matJ_{\hat{\vtheta}}^\T \matSigma_{\hat{\vtheta}}\matJ_{\hat{\vtheta}}\right] 
   \;\approx\; \E_\vz\!\left[\matJ_{\hat{\vtheta}}^\T \,\E_\vz\!\left[\matSigma_{\hat{\vtheta}}\right]\matJ_{\hat{\vtheta}}\right],
\end{align*}
and that $\E\!\left[\matB_{\hat{\vtheta}}\right]$ is proportional to $\E\!\left[\matSigma_{\hat{\vtheta}}\right]$, i.e., $\E\!\left[\matB_{\hat{\vtheta}}\right] \approx \alpha \,\E\!\left[\matSigma_{\hat{\vtheta}}\right]$ where $\alpha$ is the same $\alpha$ for which $\matC(\hat{\vtheta}) \approx \alpha \,\matF(\hat{\vtheta})$. $\matC(\hat{\vtheta}) \approx \alpha \,\matF(\hat{\vtheta})$. 

Consider the situation at $\vtheta^\star$ (instead of $\hat{\vtheta}$). If both observations hold exactly at $\vtheta^\star$, then $\matC(\vtheta^\star)$ would have been exactly proportional to $\matF(\vtheta^\star)$, and our proposed estimator is a good estimator of the gap. The following proposition shows that even if only the first observation holds, a slightly weaker bound on the asymptotic gap holds.
\begin{proposition}
\label{prop:spectral_gap_estimate}
    Suppose Eq.~\eqref{eq:uncor_err}
holds at $\vtheta^\star$ for $\matC(\vtheta^\star)$ and $\matF(\vtheta^\star)$, and suppose $\Tnorm{\vtheta^\star}$ is bounded as $\matLambda\to0$. Denote $\alpha^\star=\trace(\E[\matB(\vtheta^\star)])/\trace(\E[\matSigma(\vtheta^\star)])$, and let $\kappa$ be the condition number of $\E\left[\matJ(\vtheta^\star)(\matF(\vtheta^\star)+\matLambda)^{-1}\matJ(\vtheta^\star)^\T \right]$. Then, 
    $$
    \kappa^{-1}\alpha^\star \leq \frac{\lim_{n\to\infty}n\cdot\E\left[\mathrm{Gap}\right]+O(\Tnorm{\matLambda})}{\rank_\matLambda(\matF(\vtheta^\star))} \leq \kappa \alpha^\star 
    $$
\end{proposition}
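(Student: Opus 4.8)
The plan is to reduce the statement, via Proposition~\ref{prop:limit_with_o_not_calibrated} and the factorizations of Eq.~\eqref{eq:C-to-B}, to a one-line eigenvalue estimate for traces of products with a fixed positive semidefinite matrix. First I would fix the numerator: since $\Tnorm{\vtheta^\star}$ stays bounded as $\matLambda\to0$, Proposition~\ref{prop:limit_with_o_not_calibrated} gives $\lim_{n\to\infty}n\cdot\E[\mathrm{Gap}] = \trace(\matC(\vtheta^\star)\matH_\matLambda(\vtheta^\star)^{-1}) + O(\Tnorm{\matLambda})$, and, treating the standard local-minimum approximation $\matH(\vtheta^\star)\approx\matF(\vtheta^\star)$ as an equality (as done elsewhere in this section, with the residual folded into the $O(\Tnorm{\matLambda})$ term in the statement), it suffices to show that the ratio $\trace(\matC(\vtheta^\star)(\matF(\vtheta^\star)+\matLambda)^{-1})/\rank_\matLambda(\matF(\vtheta^\star))$ lies in $[\kappa^{-1}\alpha^\star, \kappa\alpha^\star]$.

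Next I would unwind numerator and denominator with the factorizations, which hold exactly at $\vtheta^\star$ by hypothesis: $\matC(\vtheta^\star) = \E[\matJ^\T\bar{\matB}\matJ]$ and $\matF(\vtheta^\star) = \E[\matJ^\T\bar{\matSigma}\matJ]$, where $\matJ := \matJ(\vtheta^\star)$ is random in $\vx$ while $\bar{\matB} := \E[\matB(\vtheta^\star)]$ and $\bar{\matSigma} := \E[\matSigma(\vtheta^\star)]$ are fixed positive semidefinite matrices (expectations of a rank-one outer product and of a categorical covariance, respectively). Introduce the positive definite matrix $\bar{\matM} := \E[\matJ(\matF(\vtheta^\star)+\matLambda)^{-1}\matJ^\T]$, whose condition number is $\kappa$ by hypothesis. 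By cyclicity of the trace and linearity of expectation,
\[
\trace\big(\matC(\vtheta^\star)(\matF(\vtheta^\star)+\matLambda)^{-1}\big) = \trace(\bar{\matB}\bar{\matM}), \qquad \rank_\matLambda(\matF(\vtheta^\star)) = \trace\big(\matF(\vtheta^\star)(\matF(\vtheta^\star)+\matLambda)^{-1}\big) = \trace(\bar{\matSigma}\bar{\matM}),
\]
so it remains to bound $\trace(\bar{\matB}\bar{\matM})/\trace(\bar{\matSigma}\bar{\matM})$.

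The core estimate is elementary: if $\matM$ is positive definite with eigenvalues in $[\lambda_{\min},\lambda_{\max}]$ and $\matA$ is positive semidefinite, then $\lambda_{\min}\trace(\matA) \le \trace(\matA\matM) \le \lambda_{\max}\trace(\matA)$, as one sees by diagonalizing $\matA = \sum_i a_i\vu_i\vu_i^\T$ with $a_i\ge 0$ and bounding each Rayleigh quotient $\vu_i^\T\matM\vu_i \in [\lambda_{\min},\lambda_{\max}]$. Applying this with $\matM=\bar{\matM}$ to $\matA=\bar{\matB}$ and to $\matA=\bar{\matSigma}$ and dividing the two resulting chains of inequalities gives $\trace(\bar{\matB}\bar{\matM})/\trace(\bar{\matSigma}\bar{\matM}) \in [\kappa^{-1}\alpha^\star, \kappa\alpha^\star]$, since $\kappa = \lambda_{\max}(\bar{\matM})/\lambda_{\min}(\bar{\matM})$ and $\alpha^\star = \trace(\bar{\matB})/\trace(\bar{\matSigma})$; together with the first step this is the claim. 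I do not foresee a genuine obstacle here: the only care points are bookkeeping --- making precise that the $+O(\Tnorm{\matLambda})$ in the statement collects both the higher-order term of Proposition~\ref{prop:limit_with_o_not_calibrated} and the Fisher--Hessian discrepancy at $\vtheta^\star$ --- and the harmless nondegeneracy that $\bar{\matM}$ is positive definite so that $\kappa$ is finite (if it is singular the bounds are vacuous).
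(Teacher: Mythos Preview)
Your proposal is correct and follows essentially the same route as the paper's own proof: invoke Proposition~\ref{prop:limit_with_o_not_calibrated} (with the Fisher--Hessian identification absorbed into the $O(\Tnorm{\matLambda})$ term), use the factorizations of Eq.~\eqref{eq:C-to-B} together with cyclicity of the trace to rewrite both numerator and denominator as $\trace(\bar{\matB}\bar{\matM})$ and $\trace(\bar{\matSigma}\bar{\matM})$, and then apply the elementary two-sided eigenvalue bound $\lambda_{\min}\trace(\matA)\le\trace(\matA\bar{\matM})\le\lambda_{\max}\trace(\matA)$ to each and divide. Your write-up is in fact slightly more explicit than the paper's in justifying the eigenvalue estimate and in flagging the bookkeeping around the $O(\Tnorm{\matLambda})$ term.
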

\begin{proof}
\label{app:proof_spectral_gap_estimate}
Eq.~\eqref{eq:uncor_err} holds for $\matC(\vtheta^\star)$ and $\matF(\vtheta^\star)$ at $\vtheta^\star$, we have:
    \begin{align*}
        \trace(\matC(\vtheta^\star)(\matF(\vtheta^\star)+\matLambda)^{-1})&=\trace(\E[\matJ(\vtheta^\star)^\T\E[\matB(\vtheta^\star)]\matJ(\vtheta^\star)](\matF(\vtheta^\star)+\matLambda)^{-1}) \\
        &= \trace(\E[\matB(\vtheta^\star)]\E[\matJ(\vtheta^\star)(\matF(\vtheta^\star)+\matLambda)^{-1}\matJ(\vtheta^\star)^\T])
    \end{align*}
    and:
    \begin{align*}
        \trace(\matF(\vtheta^\star)(\matF(\vtheta^\star)+\matLambda)^{-1})&=\trace(\E[\matJ(\vtheta^\star)^\T\E[\matSigma(\vtheta^\star)]\matJ(\vtheta^\star)](\matF(\vtheta^\star)+\matLambda)^{-1}) \\
        &= \trace(\E[\matSigma(\vtheta^\star)]\E[\matJ(\vtheta^\star)(\matF(\vtheta^\star)+\matLambda)^{-1}\matJ(\vtheta^\star)^\T])
    \end{align*}
    Thus:
    \begin{equation*}
    \lambda_{\min} \cdot \trace(\E[\matB(\vtheta^\star)]) \leq \trace(\matC(\vtheta^\star)(\matF(\vtheta^\star)+\matLambda)^{-1}) \leq \lambda_{\max}\cdot \trace(\E[\matB(\vtheta^\star)])
    \end{equation*}
    and:
    \begin{equation*}
        \lambda_{\min} \cdot \trace(\E[\matSigma(\vtheta^\star)]) \leq \trace(\matF(\vtheta^\star)(\matF(\vtheta^\star)+\matLambda)^{-1}) \leq \lambda_{\max}\cdot \trace(\E[\matSigma(\vtheta^\star)])
   \end{equation*}
   where $\lambda_{\min}$ and $\lambda_{\max}$ are the extreme eigenvalues of $\E\left[\matJ(\vtheta^\star)(\matF(\vtheta^\star)+\matLambda)^{-1}\matJ(\vtheta^\star)^\T \right]$.
    Write:
    \begin{align*}
        \lim_{n\to\infty}n\cdot\E\left[\mathrm{Gap}\right] 
        &= \trace(\matC(\vtheta^\star)(\matF(\vtheta^\star)+\matLambda)^{-1}) + O(\Tnorm{\matLambda})\\
        &= \frac{\trace(\matC(\vtheta^\star)(\matF(\vtheta^\star)+\matLambda)^{-1})}{\trace(\matF(\vtheta^\star)(\matF(\vtheta^\star)+\matLambda)^{-1})}\rank_\matLambda(\matF(\vtheta^\star)) + O(\Tnorm{\matLambda})
    \end{align*}
    From the inequalities above and recalling that $\alpha^\star=\trace(\E[\matB(\vtheta^\star)])/\trace(\E[\matSigma(\vtheta^\star)])$, we get the desired result.
\end{proof}

The quantity $\alpha^\star$ can be interpreted as a ratio between error and uncertainty.  To see this, note that
\begin{align*}
    \mathrm{MSE}(\vtheta^\star) & \coloneqq \E\left[\TnormS{\vy - \hat{\vy}(\vtheta^\star)}\right] = \E[\trace(\matB(\vtheta^\star))] \\
    \mathrm{Uncertainty}(\vtheta^\star) & \coloneqq \E\left[1 - \TnormS{\hat{\vy}(\vtheta^\star)} \right]\\
    & = \E\left[\trace(\diag(\hat{\vy}(\vtheta^\star)) - \hat{\vy}(\vtheta^\star) \hat{\vy}(\vtheta^\star)^\T)\right] \\
    & = \trace(\E[\matSigma(\vtheta^\star)])
\end{align*}
For a vector of probabilities $\hat{\vy}$ in a multinomial distribution, the more spiked the probabilities are (and so the outcome is more certain), the closer $1-\TnormS{\hat{\vy}}$ will be to zero, and the more equal the values are (and so the outcome is less certain) the larger this value is. Thus,  $\mathrm{Uncertainty}(\vtheta^\star)$ is indeed a measure of mean uncertainty. 

\subsection{Estimating the Gap from Training Data}

Eq.~\eqref{eq:our-gap} provides a reasonable estimate of the gap. However, in the form presented, it cannot be computed from the training data. First, we do not know $\vtheta^\star$. Here, we might attempt to use $\hat{\vtheta}$ of $\vtheta^\star$. However, we now face another issue: our samples from $\vz$ are correlated with $\hat{\vtheta}$, so technically they cannot be used to estimate $\matF(\hat{\vtheta})$ and $\matC(\hat{\vtheta})$. However, experiments reveal that estimating $\matF(\hat{\vtheta})$ using training data and using this to estimate $\rank_\matLambda(\matF(\hat{\vtheta}))$ provides a reasonable estimate for the purpose of estimating the gap. 

\begin{figure}[t]
    \centering
    \includegraphics[scale=0.55]{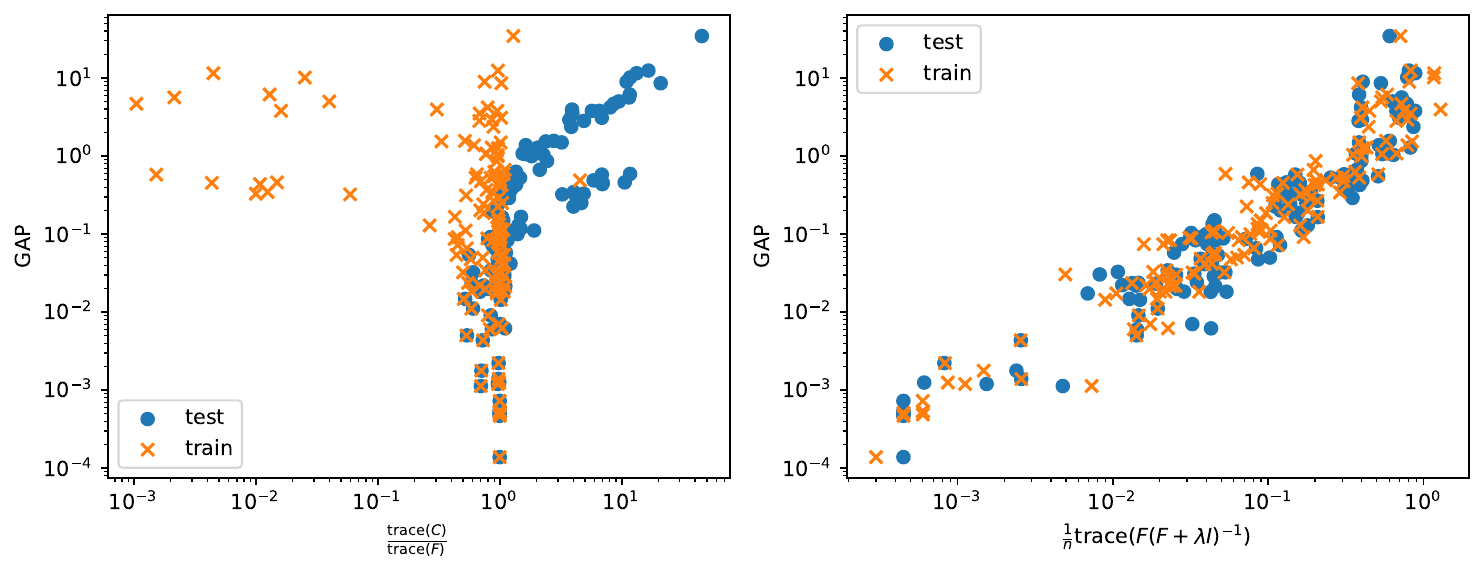}
    \caption{Predicting generalization gap factors from training data. Left: Gap versus $\frac{\trace(\matC)}{\trace(\matF)}$ shows poor correlation due to overfitting; Right: Gap versus $\rank_\matLambda(\matF)$ demonstrates strong correlation despite overfitting (orange: training, blue: test).
    }
    \label{fig:gap_vs_metrics}
\end{figure}

However, the prefactor $\trace(\matC(\hat{\vtheta})) / \trace(\matF(\hat{\vtheta}))$ cannot be reliably estimated from training data using standard approaches, as it is very sensitive to overfitting. See Figure~\ref{fig:gap_vs_metrics}.

To shed light on why this happens, notice that the prefactor is essentially a ratio between error and uncertainty, as discussed in the previous subsection. We have the following:
\begin{proposition}
\label{prop:limit_mse_uncertainty}
    Let $0<\epsilon < 1/2$ and let $i$ be the random integer such that $\vy=\mathrm{onehot}(i)$. Let $\vtheta$ be some {\em fixed} parameters (not a random variable). Assume that $1-\hat{\vy}(\vtheta)_i \leq \epsilon$ almost surely, where $\hat{\vy}(\vtheta)_i$ denotes the $i$th entry to $\hat{\vy}(\vtheta)$. Then, $\frac{\mathrm{MSE}(\vtheta)}{\mathrm{Uncertainty}(\vtheta)}\leq2\epsilon$ almost surely.
\end{proposition}
\begin{proof}
\label{app:proof_limit_mse_uncertainty}
    Since $\vtheta$ is fixed, for conciseness we denote $\hat{\vy}=\hat{\vy}(\vtheta)$. Note that this is still a random variable, since it depends on $\vz$. All expectations in the proof are with respect to $\vz$. 
    
    Let $E:=1-\hat{\vy}_i$. Since $\hat{\vy}_i=1-E$ and $\sum_{j}\hat{\vy}_j=1$, we get  $\sum_{j\ne i}\hat{\vy}_j=E$. Combining the fact that $\hat{\vy}$ is entrywise non-negative, we get $\sum_{j\ne i}\hat{\vy}_j\le E^2$.
    Thus: 
    $$
    \TnormS{\vy-\hat{\vy}} = (1-\hat{\vy}_i)^2 + \sum_{j\ne i}\hat{\vy}^2_j \leq 2E^2
    $$ 
    and, 
    $$
    1-\TnormS{\hat{\vy}} = 1 - \left(\hat{\vy}^2_i +  \sum_{j\ne i}\hat{\vy}^2_j\right) \ge 1-\left((1-E)^2+E^2\right) =2E(1-E).
    $$ Finally, we have
    \begin{equation*}
        \frac{\mathrm{MSE}(\vtheta}{\mathrm{Uncertainty}(\vtheta)} 
        = \frac{\E\left[\TnormS{\vy-\hat{\vy}}\right]}{\E\left[1-\TnormS{\hat{\vy}}\right]}
        \leq \frac{\E[E^2]}{\E[E(1-E)]}
        \leq \frac{\epsilon\E[E]}{(1-\epsilon)\E[E]} 
        = \frac{\epsilon}{1-\epsilon}\leq 2\epsilon.
    \end{equation*}
    
\end{proof}

The last proposition shows that when overfitting, the error-uncertainty ratio on the training set goes to zero. On the other hand, on the test population, in case of overfitting we can expect the MSE to be bounded away from zero, while the uncertainty still goes to zero. So, in the case of overfitting, the error-uncertainty ratio behaves oppositely for training and test sets. We draw two conclusions:
\begin{itemize}
    \item We can only reliably estimate the part of the generalization gap that is due to the soft-rank of $\matF(\vtheta^\star)$ (which is a measure of flatness). 
      
    \item Using this estimate of the gap becomes less and less indicative of the actual gap as the model overfits. However, it is very indicative of the actual gap as long as there is no overfitting, i.e., the error-uncertainty ratio is bounded away from zero. One can hope to gauge if this the case from training data.   
\end{itemize}

\section{Efficient Gap Estimation}
\label{sec:efficient}
Regardless of whether indicative of the actual gap or not, estimating the prefactor $\trace(\matC(\hat{\vtheta})) / \trace(\matF(\hat{\vtheta}))$ is relatively cheap as it involves only the trace of the matrices. Indeed, \citet{thomas2020interplay} used it as a tractable estimate of the gap. However, the term $\rank_\matLambda(\matF(\hat{\vtheta}))$ is much costlier to estimate since it involves full matrices. 

To reduce cost, we propose the use of the {\em Kronecker-Factored Approximate Curvature} (KFAC) \citep{martens2015optimizing,martens2020new} approximation of $\matF(\hat{\vtheta})$, or its diagonal. We then use the soft rank of the approximation as an estimate of $\rank_\matLambda(\matF(\hat{\vtheta}))$.
KFAC first approximates $\matF(\hat{\vtheta})$ using a block diagonal matrix in which the non-zero diagonal blocks coincide with those of $\matF(\hat{\vtheta})$. Then, each diagonal block is approximated as a Kronecker product of two matrices.

Empirically, this scheme yields excellent estimates of the gap (when the prefactor is computed using test data); see Figure~\ref{fig:gap_estimation_approximation}. Theoretically, the following proposition establishes that the first phase (keeping only diagonal blocks) does not make the approximation more optimistic. A weaker version, which shows that (regular) rank  only increases by the approximation (i.e., when $\matB$ is the identity matrix), is an immediate corollary of Theorem 1 in \citep{lundquist-1996-rank-inequal}.
\begin{proposition}
    \label{prop:block_submatrix}
    Let $\matA,\matB\in\R^{n\times n}$ be two positive semidefinite matrices. Suppose we can write 
    $$
    \matA = \begin{bmatrix}
        \matA_{11} &  \cdots & \matA_{1m}\\
        \vdots     & \ddots  & \vdots    \\
        \matA_{m1} & \cdots  & \matA_{mm} 
    \end{bmatrix},\quad 
    \matB = \begin{bmatrix}
        \matB_{11} &         & \\
                   & \ddots  &   \\
                   &         & \matB_{mm} 
    \end{bmatrix}
    $$
    where the diagonal blocks are square and same sizes across $\matA$ and $\matB$. Let, 
    $$
    \tilde{\matA} = \begin{bmatrix}
        \matA_{11} &         & \\
                   & \ddots  &   \\
                   &         & \matA_{mm} 
    \end{bmatrix}
    $$
    Then, $\rank_{\matB}(\matA)\leq \rank_{\matB}(\tilde{\matA})$.
\end{proposition}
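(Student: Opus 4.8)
The plan is to recast the soft rank through its complementary ``soft projection'' and then reduce everything to a comparison between the diagonal blocks of an inverse and the inverses of the diagonal blocks. Write $n$ for the common order and recall that $\rank_\matB(\matS)=\trace(\matP_\matB(\matS))=n-\trace(\matP^{\perp}_\matB(\matS))$ with $\matP^{\perp}_\matB(\matS)=\matB(\matS+\matB)^{-1}$. Hence the claim $\rank_\matB(\matA)\le\rank_\matB(\tilde\matA)$ is equivalent to
\[
\trace\!\big(\matB(\matA+\matB)^{-1}\big)\ \ge\ \trace\!\big(\matB(\tilde\matA+\matB)^{-1}\big).
\]
Here I will assume the matrices being inverted are nonsingular, exactly as in the definition of $\rank_\matB$ (e.g.\ $\matB\succ0$, in which case every principal block $\matA_{ii}+\matB_{ii}$ is positive definite and so $\tilde\matA+\matB$ is invertible as well).

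Next I would exploit the two structural facts. Setting $\matM\coloneqq\matA+\matB$, since $\matB$ is block diagonal the diagonal blocks of $\matM$ are $\matM_{ii}=\matA_{ii}+\matB_{ii}$, which are exactly the blocks of the genuinely block-diagonal matrix $\tilde\matA+\matB$. Because $\matB=\diag(\matB_{11},\dots,\matB_{mm})$, only the diagonal blocks of the two inverses contribute to the traces:
\[
\trace\!\big(\matB\matM^{-1}\big)=\sum_{i=1}^m\trace\!\big(\matB_{ii}\,[\matM^{-1}]_{ii}\big),\qquad \trace\!\big(\matB(\tilde\matA+\matB)^{-1}\big)=\sum_{i=1}^m\trace\!\big(\matB_{ii}(\matA_{ii}+\matB_{ii})^{-1}\big).
\]
So it suffices to show, for each $i$, the termwise inequality $\trace(\matB_{ii}[\matM^{-1}]_{ii})\ge\trace(\matB_{ii}(\matA_{ii}+\matB_{ii})^{-1})$. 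The key step is the Schur-complement monotonicity: grouping block $i$ against the union of the remaining blocks, $[\matM^{-1}]_{ii}=(\matM_{ii}-\matS_i)^{-1}$ where $\matS_i\succeq0$ is built from the off-diagonal blocks and the inverse Schur complement of the rest; thus $\matM_{ii}-\matS_i\preceq\matM_{ii}$, and inversion reverses the order on positive definite matrices, giving $[\matM^{-1}]_{ii}\succeq(\matM_{ii})^{-1}=(\matA_{ii}+\matB_{ii})^{-1}$. Finally, $\matB_{ii}\succeq0$ implies $\trace\!\big(\matB_{ii}([\matM^{-1}]_{ii}-(\matA_{ii}+\matB_{ii})^{-1})\big)\ge0$ (write it as $\trace$ of $\matB_{ii}^{1/2}(\cdot)\matB_{ii}^{1/2}$), and summing over $i$ closes the argument.

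I do not expect a serious obstacle here; the only thing to handle carefully is the invertibility bookkeeping when $\matB$ is only positive semidefinite. This is dealt with either by restricting to $\matB\succ0$ (the relevant regime, matching the definition of $\rank_\matB(\cdot)$), or by a routine limiting argument, replacing $\matB$ by $\matB+\varepsilon\matI$ and letting $\varepsilon\downarrow0$, using continuity of the soft rank and Eq.~\eqref{eq:lim_rank_lambda} in the degenerate case $\matB\to0$ to recover the plain-rank statement. Everything else is standard: Schur-complement order monotonicity and the fact that $\matX\mapsto\trace(\matB_{ii}\matX)$ is order-preserving on the positive semidefinite cone.
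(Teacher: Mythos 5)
Your proof is correct, but it takes a genuinely different route from the paper's. The paper writes $\rank_\matB(\matA)$ as a concave trace function, $\sum_i h(\lambda_i(\matB^{-1/2}\matA\matB^{-1/2}))$ with $h(x)=1-(x+1)^{-1}$, and invokes its Theorem on pinchings of concave trace functions, which rests on Schur's majorization theorem (the eigenvalues of a matrix majorize those of its block-diagonal pinching) together with Karamata's inequality. You instead use the complementary form $\rank_\matB(\matA)=n-\trace(\matB(\matA+\matB)^{-1})$, observe that block-diagonality of $\matB$ makes the trace depend only on the diagonal blocks of the resolvent, and settle each block by the Schur-complement inequality $[(\matA+\matB)^{-1}]_{ii}\succeq(\matA_{ii}+\matB_{ii})^{-1}$ paired with monotonicity of $\matX\mapsto\trace(\matB_{ii}\matX)$ on the positive semidefinite cone; all steps check out, including the reduction $\matM_{ii}=\matA_{ii}+\matB_{ii}$ and the per-block trace inequality. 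What the paper's route buys is uniformity: the same majorization argument covers every concave trace function at once, in particular the Bayesian $\log\det$ complexity and the batching bound of Theorem~\ref{theorem:batch}, whereas your argument leans on the linearity of $\trace(\matB\,\cdot\,)$ in the inverse and so does not transfer to $\log\det$. What your route buys is elementarity and self-containment — no majorization or Karamata needed — plus a somewhat more careful treatment of the positive semidefinite versus positive definite bookkeeping for $\matB$, which the paper's proof glosses over when it writes $\matB^{-1/2}$; your $\varepsilon$-perturbation remark closes that gap cleanly.
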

(The proof that we give here requires the concept of a concave trace function and Theorem~\ref{theorem:strong}. Both are introduced in the next subsections.)
\begin{proof}
Let $h:[0,\infty)\to[0,1)$, $h(x)=1-(x+1)^{-1}$, and let $\lambda_d\ge\dots\ge\lambda_1$ be the eigenvalues of $\matB^{-\frac{1}{2}} \matA \matB^{-\frac{1}{2}}$ then
\begin{equation*}
\rank_\matB(\matA)
= \trace\left(\matI - \matB\left(\matA+\matB\right)^{-1}\right)
= \trace\left(\matI - \left(\matB^{-\frac{1}{2}}\matA\matB^{-\frac{1}{2}} + \matI\right)^{-1}\right)
= \sum_{i=1}^d h\left(\lambda_i\right).
\end{equation*}
Thus, since a $h$ is continuous and concave, we complete the proof by applying Theorem~\ref{theorem:strong}
\end{proof}
}

\subsection{Concave Trace Functions}
\label{sec:concave_operator}
This section presents two theorems that yield computationally efficient bounds on the soft rank by exploiting its representation as a concave trace function. The same bounds also apply to the log-determinant complexity term arising in the Bayesian formulation.

\begin{definition}[Trace function]\label{definition:trace_function}
Let $h:\R\to\R$ be a scalar function. Its spectral extension to symmetric matrices is
\[
h:\sym{d}\to\R,\qquad
\matX \longmapsto h(\matX)\coloneqq \sum_{i=1}^{d} h\!\left(\lambda_i(\matX)\right),
\]
where $\sym{d}$ denotes the space of real $d\times d$ symmetric matrices and
$\lambda_1(\matX)\ge \lambda_2(\matX)\ge \cdots \ge \lambda_d(\matX)$
are the eigenvalues of $\matX$ ordered nonincreasingly.
\end{definition}

\subsection{Block Diagonal Bound fo Concave Trace Functions}
\begin{definition}[Pinching]
    We say that a matrix $\matB$ is a {\em block diagonal submatrix} of another matrix $\matA$ if we can write 
    $$
    \matA = \begin{bmatrix}
        \matA_{11} &  \cdots & \matA_{1m}\\
        \vdots     & \ddots  & \vdots    \\
        \matA_{m1} & \cdots  & \matA_{mm} 
    \end{bmatrix},\quad 
    \matB = \begin{bmatrix}
        \matA_{11} &         & \\
                   & \ddots  &   \\
                   &         & \matA_{mm} 
    \end{bmatrix}
    $$
    The operator $\calP:\sym{d}\to\sym{d},$ such that $\calP(\matA)=\matB$ is called pinching and can be written as $\calP(\matA)=\sum_{i=1}^m\matD_i\matA\matD_i$ with diagonal projections $\matD_i$.
\end{definition}
\begin{theorem}
\label{theorem:strong}Let a continuous and concave $h:\R\to\R$ define a trace function as in Definition~\ref{definition:trace_function} and let $\calP$ be a pinching operator on $\sym{d}$. For all
$
\matX,\matA \in\sym{d},
$ 
such that $\matA=\calP(\matA)$ is a block diagonal matrix we have,
$$
h\left(\matA\matX\matA\right) \le h\left(\matA\calP(\matX)\matA\right).
$$
\end{theorem}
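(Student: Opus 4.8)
The plan is to peel off the conjugation by $\matA$, reduce to a pure pinching inequality, and derive that from concavity of the trace functional. \emph{Step 1 (removing $\matA$).} Writing the pinching as $\calP(\matX)=\sum_{i=1}^m\matD_i\matX\matD_i$ with the diagonal block projections $\matD_i$, block-diagonality of $\matA$ gives $\matD_i\matA=\matA\matD_i$ for all $i$, whence
\[
\calP(\matA\matX\matA)=\sum_{i=1}^m\matD_i\matA\matX\matA\matD_i=\matA\Bigl(\sum_{i=1}^m\matD_i\matX\matD_i\Bigr)\matA=\matA\,\calP(\matX)\,\matA .
\]
Thus, setting $\matY\coloneqq\matA\matX\matA\in\sym{d}$, the assertion becomes exactly $h(\matY)\le h(\calP(\matY))$, which I will establish for an \emph{arbitrary} symmetric $\matY$.

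\emph{Step 2 (pinching is an average of orthogonal conjugations).} For two blocks, $\matU=\matD_1-\matD_2$ is symmetric and orthogonal and negates the off-diagonal block, so $\calP(\matY)=\tfrac12(\matY+\matU\matY\matU^\T)$. For $m$ blocks, $\calP$ factors as a composition of $m-1$ binary pinchings — first isolate block $1$ from blocks $2,\dots,m$, then block $2$ from blocks $3,\dots,m$, and so on — each of the form $\matW\mapsto\tfrac12(\matW+\matU_j\matW\matU_j^\T)$ with $\matU_j$ a diagonal $\pm1$ matrix, and a later stage leaves already-isolated blocks untouched. Expanding the composition,
\[
\calP(\matY)=\frac{1}{2^{m-1}}\sum_{k}\mat{O}_k\,\matY\,\mat{O}_k^\T ,
\]
a uniform convex combination over $2^{m-1}$ terms, where each $\mat{O}_k$ (a product of the $\matU_j$'s) is orthogonal.

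\emph{Step 3 (concavity of the trace functional).} The map $\matY\mapsto h(\matY)=\sum_i h(\lambda_i(\matY))$ is concave on $\sym{d}$: for the midpoint $\matN=\tfrac12(\matY+\matZ)$ with orthonormal eigenvectors $\vv_i$ we have $\lambda_i(\matN)=\tfrac12(\vv_i^\T\matY\vv_i+\vv_i^\T\matZ\vv_i)$, so concavity of the scalar $h$ gives $h(\lambda_i(\matN))\ge\tfrac12 h(\vv_i^\T\matY\vv_i)+\tfrac12 h(\vv_i^\T\matZ\vv_i)$; summing over $i$ and invoking the Schur–Horn majorization $\bigl(\vv_i^\T\matY\vv_i\bigr)_i\prec\bigl(\lambda_i(\matY)\bigr)_i$ (which, for concave $h$, yields $\sum_i h(\vv_i^\T\matY\vv_i)\ge\sum_i h(\lambda_i(\matY))$) gives $h(\matN)\ge\tfrac12 h(\matY)+\tfrac12 h(\matZ)$; finite convex combinations follow by iteration. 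Since $h$ is a spectral functional, $h(\mat{O}_k\matY\mat{O}_k^\T)=h(\matY)$. Combining with Step 2,
\[
h(\calP(\matY))=h\Bigl(\tfrac{1}{2^{m-1}}\sum_k\mat{O}_k\matY\mat{O}_k^\T\Bigr)\ \ge\ \tfrac{1}{2^{m-1}}\sum_k h(\mat{O}_k\matY\mat{O}_k^\T)=h(\matY),
\]
and substituting back $\matY=\matA\matX\matA$ together with $\calP(\matY)=\matA\calP(\matX)\matA$ from Step 1 completes the proof.

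\emph{Anticipated difficulty.} The two nonroutine ingredients are the orthogonal-average representation of a pinching (Step 2) — which must be carried out with \emph{real} orthogonal matrices, hence the iterated binary construction rather than the complex $m$-th-root-of-unity trick — and the concavity of $\matY\mapsto\trace h(\matY)$ for concave $h$ (Step 3). Both are classical (cf.\ Bhatia, \emph{Matrix Analysis}), so the write-up chiefly needs to state them carefully; the one genuine verification is that the $m-1$ binary pinchings reassemble exactly into $\calP$ without reintroducing off-diagonal blocks at later stages.
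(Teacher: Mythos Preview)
Your proof is correct but takes a different route from the paper. The paper argues in two lines: by an extension of Schur's theorem the eigenvalues of $\matY$ majorize those of $\calP(\matY)$, and then Karamata's inequality for concave $h$ immediately gives $h(\matY)\le h(\calP(\matY))$; the commutation $\calP(\matA\matX\matA)=\matA\calP(\matX)\matA$ is the same as your Step~1. You instead (i) realize $\calP$ as a uniform average of $2^{m-1}$ real orthogonal conjugations and (ii) prove that $\matY\mapsto h(\matY)$ is concave on $\sym{d}$, then combine concavity with orthogonal invariance. Both arguments ultimately lean on majorization---the paper invokes the full pinching-majorization directly, while you use only the Schur--Horn diagonal case inside your concavity proof and recover the pinching statement through the averaging trick. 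The paper's proof is shorter and cites one classical fact plus Karamata; your approach trades brevity for two reusable lemmas (concavity of the trace functional and the orthogonal-average representation of a pinching), which is arguably more structural. Either write-up is acceptable; if you want to match the paper's economy, you can replace your Steps~2--3 by the single sentence ``the eigenvalues of $\matY$ majorize those of $\calP(\matY)$, so Karamata's inequality for concave $h$ gives $h(\matY)\le h(\calP(\matY))$.''
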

\begin{proof}
Let $\matY \in\sym{d}$.  By an extension of Schur's theorem, the eigenvalues of $\matY$, $\alpha_{d}\ge\dots\ge\alpha_{1}$,
majorize the eigenvalues of $\calP(\matY)$, $\beta_{d}\ge\dots\ge\beta_{1}$ (for the 2-by-2 block case, see
\citep[Chapter 9, Theorem C.1]{Marshall2010Matrix}; the general case can be obtained by induction). That is, $\sum_{i=1}^{k}\alpha_{i}\ge\sum_{i=1}^{k}\beta_{i}$
for all $1\le k<d$ and $\sum_{i=1}^{d}\alpha_{i}=\sum_{i=1}^{d}\beta_{i}$.
Thus, by Karamata's inequality \citep{karamata1932inegalite,Marshall2010Matrix},
for any concave function $h$, we have
\begin{equation}
\label{eq:weak}
    h\left(\matY\right)=\sum_{i=1}^{d}h\left(\alpha_{i}\right)\le\sum_{i=1}^{d}h\left(\beta_{i}\right)=h\left(\calP(\matY)\right)
\end{equation}
Now, Let $\matA\in \sym{d}$ such that 
$$
\matA=\calP({\matA})=\sum_{i=1}^m \matD_i\matA\matD_i,
$$ we have that
$$
\matA\matD_i=\matD_i\matA\matD_i=\matD_i\matA.
$$
Thus,
\[
\calP(\matA\matX\matA)=\sum_{i=1}^m\matD_i\matA\matX\matA\matD_i=\matA\left(\sum_{i=1}^m\matD_i\matX\matD_i\right)\matA=\matA\calP(\matX)\matA
\]
Finally, let $\matX\in\sym{d}$, substitute $\matY=\matA\matX\matA$ in \eqref{eq:weak} and obtain:
\[
h(\matA\matX\matA) \le h\left(\calP(\matA\matX\matA)\right)=h\left(\matA\calP(\matX)\matA\right).
\]
\end{proof}

\subsection{Batching of Concave Trace Functions}
Sometimes one may wish to optimize the complexity term. Efficient optimization requires that the objective be optimized using stochastic batches. One option is to use {\em Stochastic Compositional Gradient Descent} (SCGD) \citep{wang2017stochastic}. However, if the additional burden of this approach is undesirable, the following theorem may be applied. It is applicable to both the Bayesian complexity and the soft rank, as explained at the beginning of this section.:
\begin{theorem}
\label{theorem:batch}
Let a continuous and concave $h:\R\to\R$  satisfying $h(0)=0$ define a trace function as in Definition~\ref{definition:trace_function}. Then, for all
$\{\matX_i\}_{i=1}^{m}\subset \psd d$, we have 
$$
h\left(\sum_{i=1}^{m}\matX_i\right)
\le \sum_{i=1}^{m}h\left(\matX_i\right).
$$
\end{theorem}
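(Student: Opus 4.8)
The plan is to first reduce to the case $m=2$. Since $\sum_{i=2}^{m}\matX_i$ is itself positive semidefinite, the two-term inequality $h(\matA+\matB)\le h(\matA)+h(\matB)$ for arbitrary $\matA,\matB\in\psd d$ telescopes to the general statement, so everything hinges on that two-term inequality.

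To prove it I would set up a majorization argument in the same spirit as the step \eqref{eq:weak} used for Theorem~\ref{theorem:strong}. Concretely, the claim is that the eigenvalue vector of the $2d\times 2d$ matrix $(\matA+\matB)\oplus\matZero_d$ majorizes that of $\matA\oplus\matB$. The two vectors have the same sum ($\trace\matA+\trace\matB$ each), so only the partial-sum dominance needs checking. For $d<k\le 2d$ there is nothing to do: the $k$-th partial sum on the left already equals $\trace(\matA+\matB)$, which is an upper bound for any partial sum of the positive semidefinite matrix $\matA\oplus\matB$. For $1\le k\le d$ I would write the sum of the $k$ largest eigenvalues of $\matA\oplus\matB$ as $\Phi_p(\matA)+\Phi_{k-p}(\matB)$ for the appropriate split $p+(k-p)=k$ (with $\Phi_j(\cdot)$ denoting the sum of the $j$ largest eigenvalues), pick a $k$-dimensional subspace $W\subseteq\R^d$ containing the top-$p$ eigenspace of $\matA$ and the top-$(k-p)$ eigenspace of $\matB$ (possible since $p+(k-p)=k\le d$), and chain \(\Phi_k(\matA+\matB)\ge\trace(\matP_W(\matA+\matB))=\trace(\matP_W\matA)+\trace(\matP_W\matB)\ge\Phi_p(\matA)+\Phi_{k-p}(\matB)\), where the first inequality is Ky Fan's maximum principle and the last uses that $\matP_W$ dominates the two spectral projections and $\matA,\matB\succeq 0$.

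With the majorization established, Karamata's inequality for the concave $h$, applied exactly as in \eqref{eq:weak}, gives $\sum_{j=1}^{2d}h\bigl(\lambda_j((\matA+\matB)\oplus\matZero_d)\bigr)\le\sum_{j=1}^{2d}h\bigl(\lambda_j(\matA\oplus\matB)\bigr)$. This is where the hypothesis $h(0)=0$ is spent: the left-hand side equals $\sum_{j=1}^{d}h(\lambda_j(\matA+\matB))+d\cdot h(0)=h(\matA+\matB)$, and the right-hand side is $h(\matA)+h(\matB)$, which is the desired two-term inequality, and hence, by the reduction above, the theorem.

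I expect the main obstacle to be identifying the correct majorization. The reflexive candidates --- Ky Fan subadditivity $\Phi_k(\matA+\matB)\le\Phi_k(\matA)+\Phi_k(\matB)$ and the componentwise relation $\lambda(\matA+\matB)\prec\lambda(\matA)+\lambda(\matB)$ --- feed Karamata in the direction opposite to what is needed, and a ``single projection'' attack on the equivalent positive-part form $\trace(\matA+\matB-s\matI)_+\ge\trace(\matA-s\matI)_++\trace(\matB-s\matI)_+$ refuses to close. The resolution is the observation that padding $\matA+\matB$ with $d$ zero eigenvalues is invisible to $h$ precisely because $h(0)=0$, yet yields a spectrum that genuinely majorizes $\lambda(\matA\oplus\matB)$; after that, the proof is just Karamata plus Ky Fan, mirroring the proof of Theorem~\ref{theorem:strong}.
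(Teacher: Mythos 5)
Your proof is correct, and it shares the paper's overall skeleton (pad the spectrum with zeros so that $h(0)=0$ makes the padding invisible, establish a majorization, then apply Karamata's inequality for the concave $h$), but the key majorization is obtained by a genuinely different route. The paper handles all $m$ summands at once: it stacks the square roots into $\matV=[\matX_1^{\nicefrac{1}{2}};\dots;\matX_m^{\nicefrac{1}{2}}]$, uses $h(\matV^\T\matV)=h(\matV\matV^\T)$ (the Gram-matrix spectrum identity plus $h(0)=0$), and then observes that the diagonal blocks of $\matV\matV^\T$ are exactly the $\matX_i$, so the pinching majorization already proved for Theorem~\ref{theorem:strong} finishes the job in one line. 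You instead reduce to $m=2$ by telescoping and prove from scratch that $\lambda\bigl((\matA+\matB)\oplus\matZero_d\bigr)$ majorizes $\lambda(\matA\oplus\matB)$ via Ky Fan's maximum principle, choosing a $k$-dimensional subspace containing the top-$p$ eigenspace of $\matA$ and the top-$(k-p)$ eigenspace of $\matB$; that argument is sound (the greedy split of the $k$ largest eigenvalues of $\matA\oplus\matB$, the dimension count $p+(k-p)=k\le d$, and the use of $\matA,\matB\succeq 0$ to discard the contribution of $W$ outside the chosen eigenspaces are all handled correctly, as is the trivial range $d<k\le 2d$). What the paper's route buys is brevity and reuse of the pinching lemma; what yours buys is self-containedness — it does not need Theorem~\ref{theorem:strong} or the Schur-type block-diagonal majorization at all, only Ky Fan and Karamata — at the cost of the extra telescoping step and a longer verification of the partial-sum dominance. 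Your closing observation that the naive candidates (Ky Fan subadditivity, $\lambda(\matA+\matB)\prec\lambda(\matA)+\lambda(\matB)$) push Karamata the wrong way is accurate and correctly identifies why the zero-padding is the essential idea in both proofs.
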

\begin{proof}
Define
$$
\matV \coloneq \begin{bmatrix}
\matX_{1}^{\frac{1}{2}} \\
\matX_{2}^{\frac{1}{2}} \\
\vdots \\
\matX_{m}^{\frac{1}{2}}
\end{bmatrix}\,,
$$
so that
$$
\matV^\T\matV = \sum_{i=1}^{m}\matX_i\,.
$$
The nonzero eigenvalues of $\matV\matV^\T$ coincide with those of $\matV^\T\matV$. Any additional eigenvalues of $\matV\matV^\T$ are zero, and since $h(0)=0$, we obtain
$$
h\bigl(\matV^\T\matV\bigr)
=h\bigl(\matV\matV^\T\bigr).
$$
Now, let
$$
\matY \coloneq \diag\Bigl(\matX_1,\matX_2,\dots,\matX_m\Bigr)
$$
be the block diagonal matrix extracted from $\matV\matV^\T$. By Theorem~\ref{theorem:strong}, the concavity of $h$ implies
$$
h\bigl(\matV\matV^\T\bigr)
\le h\bigl(\matY\bigr)
=\sum_{i=1}^{m}h\bigl(\matX_i\bigr).
$$
Thus, we have
$$
h\Bigl(\sum_{i=1}^{m}\matX_i\Bigr)
=h\bigl(\matV^\T\matV\bigr)
=h\bigl(\matV\matV^\T\bigr)
\le \sum_{i=1}^{m}h\bigl(\matX_i\bigr),
$$
which completes the proof.
\end{proof}
Theorem~\ref{theorem:batch} extends the result of \citet{immer2023stochastic}, which corresponds to the special case $h(x)=\log(x + 1)$.

{%
\setlength{\overfullrule}{0pt}%
\begin{figure*}[t]
    \hspace{-1cm}%
    \begin{minipage}[t]{0.15\textwidth}
        \includegraphics[scale=0.4]{plots/legend_train_all}
    \end{minipage}%
    \hspace{-0.2cm}
    \begin{minipage}[t]{0.26\textwidth}
    \includegraphics[scale=0.49]{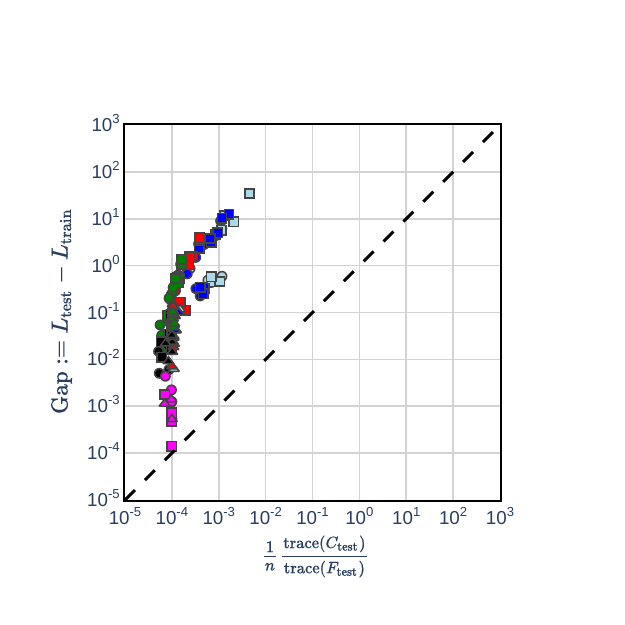}
    \end{minipage}%
    \hspace{0.8cm}
    \begin{minipage}[t]{0.26\textwidth}
        \includegraphics[scale=0.49]{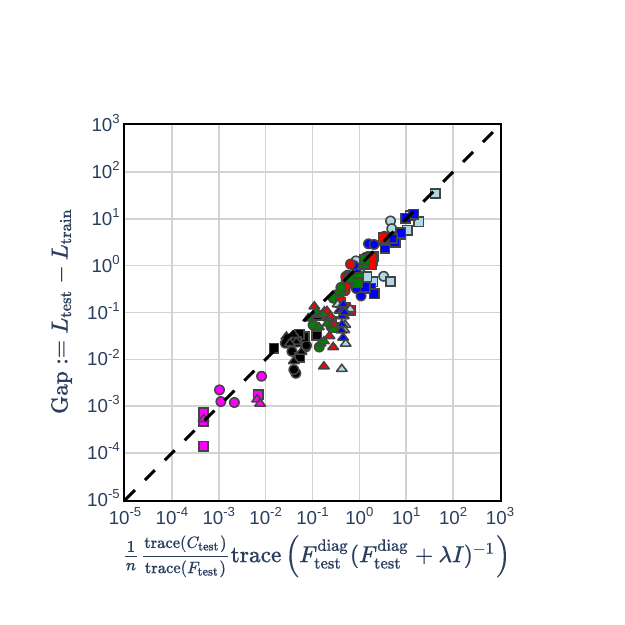}
    \end{minipage}%
    \hspace{0.8cm}
    \begin{minipage}[t]{0.26\textwidth}
        \includegraphics[scale=0.49]{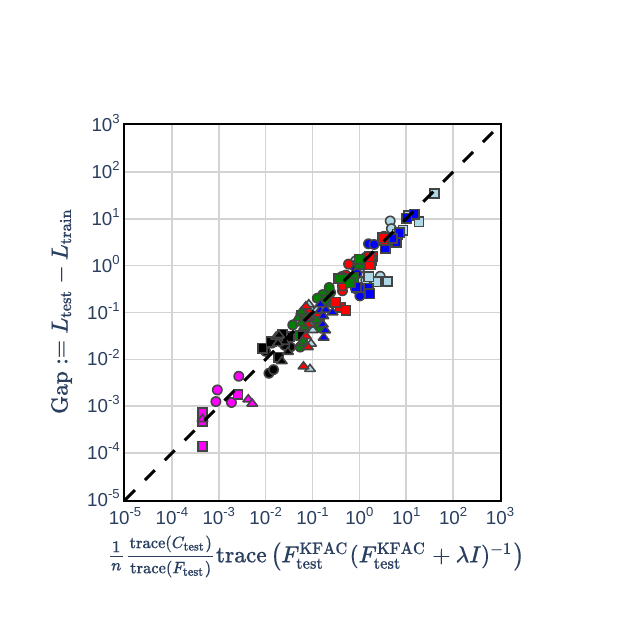}
    \end{minipage}
    \caption{Cheap generalization gap estimation. Left, the trace ratio suggested
    by \citet{thomas2020interplay} severely underestimates the gap
    as it lacks the rank factor. Middle, a diagonal approximation
    of $\matF(\hat{\vtheta})$ is substituted in
    the soft rank-based gap estimation, clearly overestimates the
    gap as the theory predicts. Right, soft rank-based metric with KFAC approximation of $\matF(\hat{\vtheta})$, smaller gap overestimation.}
    \label{fig:gap_estimation_approximation}
\end{figure*}
}

\section{Additional Experiments}
\label{sec:experiments}
We investigate the effectiveness of Hessian-based complexity measures computed from training data alone to predict generalization gaps. Our experimental setup follows \citet{thomas2020interplay}; we evaluate on MNIST \citep{lecun1998mnist}, CIFAR-10 \citep{krizhevsky2009learning}, and SVHN \citep{netzer2011reading}. All images are first downsampled to $7 \times 7$ pixels. Models are trained for 200K steps. 

To compare similarities and discrepancies among Hessian-based complexity measures, we vary the following:
\begin{itemize}
\item \textbf{Architectures:} MLP with one hidden layer (70 ReLU units; 10-way output), BigMLP with two hidden layers (same layer sizes as the MLP), and a 3-layer CNN (first: $3 \times 3$ kernel, no padding, 15 channels; second: $3 \times 3$ kernel, no padding, 20 channels; output: 10 channels for class scores).
\item \textbf{Training-set sizes:} 5k, 10k, and 20k.
\item \textbf{Weight decay:} $\lambda=\frac{c}{n}$ with $c \in \{10^{-2}, 10^{-1}, 1, 10^{1}, 10^{2}, 10^{4}, 10^{5}\}$ and $n$ the training-set size.
\item \textbf{Learning rates:} $0.01$, $0.005$, $0.001$; Adam optimizer ($\mu=0.9$).
\item \textbf{Batch sizes:} $64$, $256$, $512$.
\end{itemize}
Results related to efficient computation (Section \ref{sec:efficient}) were obtained using a subset of parameters (learning rate 0.01, batch size 512).

All experiments were carried out on a server with Intel Xeon E5-2683 v4 CPUs (2.10\,GHz, 64 cores), 755\,GB RAM, and 7 NVIDIA TITAN Xp GPUs (12\,GB each). The full grid required approximately 5 days to run.

\subsection{Correlation, Complexity vs Gap}
Our goal here is to assess the correlations between Hessian-based complexity measures (approximated via FIM) computed from training data and the generalization gap. We measure Kendall rank correlation for: $\rank_{\lambda}(\matF(\hat{\vtheta}))$, $\frac{1}{2}\log\det(\frac{1}{\lambda}\matF(\hat{\vtheta}) + \matI)$, $\trace(\matF(\hat{\vtheta}))$, and $\Tnorm{\matF(\hat{\vtheta})}$ (maximum eigen value). The results, including diagonal and KFAC approximations for soft rank and logarithmic determinant metrics, are shown in Table~\ref{tab:kendall}.

\begin{figure}[htbp]
  \centering
  \captionsetup{type=table}          
  \caption{Kendall $\tau$ correlations between
    training-set-complexity measures and the generalization gap}
  \label{tab:kendall}

  \begin{tabular}{@{}clc@{}}
    \toprule
    & Complexity measure & $\tau$ \\ \midrule
    & $\rank_\lambda(\matF)$                                    & 0.84 \\
    & $\tfrac12\log\det\!\bigl(\lambda^{-1}\matF + \matI\bigr)$ & 0.82 \\
    & $\rank_\lambda(\matF_{\mathrm{KFAC}})$                    & 0.78 \\
    & $\tfrac12\log\det\!\bigl(\lambda^{-1}\matF_{\mathrm{KFAC}} + \matI\bigr)$ & 0.76 \\
    & $\tfrac12\log\det\!\bigl(\lambda^{-1}\matF_{\mathrm{diag}} + \matI\bigr)$ & 0.72 \\
    & $\rank_\lambda(\matF_{\mathrm{diag}})$                    & 0.68 \\
    & $\trace(\matF)$                                           & 0.28 \\
    & $\lVert \matF\rVert_{2}$                                  & 0.08 \\ \bottomrule
  \end{tabular}
\end{figure}




Results show that trace and operator norm metrics are significantly less effective than log determinant and soft rank, even with diagonal FIM approximations. Log determinant and soft rank perform similarly, as expected from their shared properties (strict concavity and monotonic increase).

\subsection{Influence of Overfitting and Calibration}
\label{subsec:ratio_influence}
Next, we examine the influence of $\trace(\matC(\hat{\vtheta)})/\trace(\matF(\hat{\vtheta)})$ on the performance of $\rank_{\lambda}(\matF(\hat{\vtheta}))$. When measured on the training set, $\trace(\matC(\hat{\vtheta)})/\trace(\matF(\hat{\vtheta)})$ indicates overfitting as it approaches zero, and we expect a decline in the ability of $\rank_{\lambda}(\matF(\hat{\vtheta}))$ to predict the gap. On the other hand, when measured on the test set, $|\trace(\matC(\hat{\vtheta)})/\trace(\matF(\hat{\vtheta)})-1|$ indicates calibration as it approaches zero, and we expect an improvement in the ability of $\rank_{\lambda}(\matF(\hat{\vtheta}))$ to predict the gap. The test was conducted by measuring the Kendall correlation in a moving window that contains 50\% of the data points, and the results appear to be as expected and are presented in Figure~\ref{fig:ratio_influence}.

\begin{figure}[htbp]
  \centering

    \centering
    \includegraphics[width=0.48\linewidth]{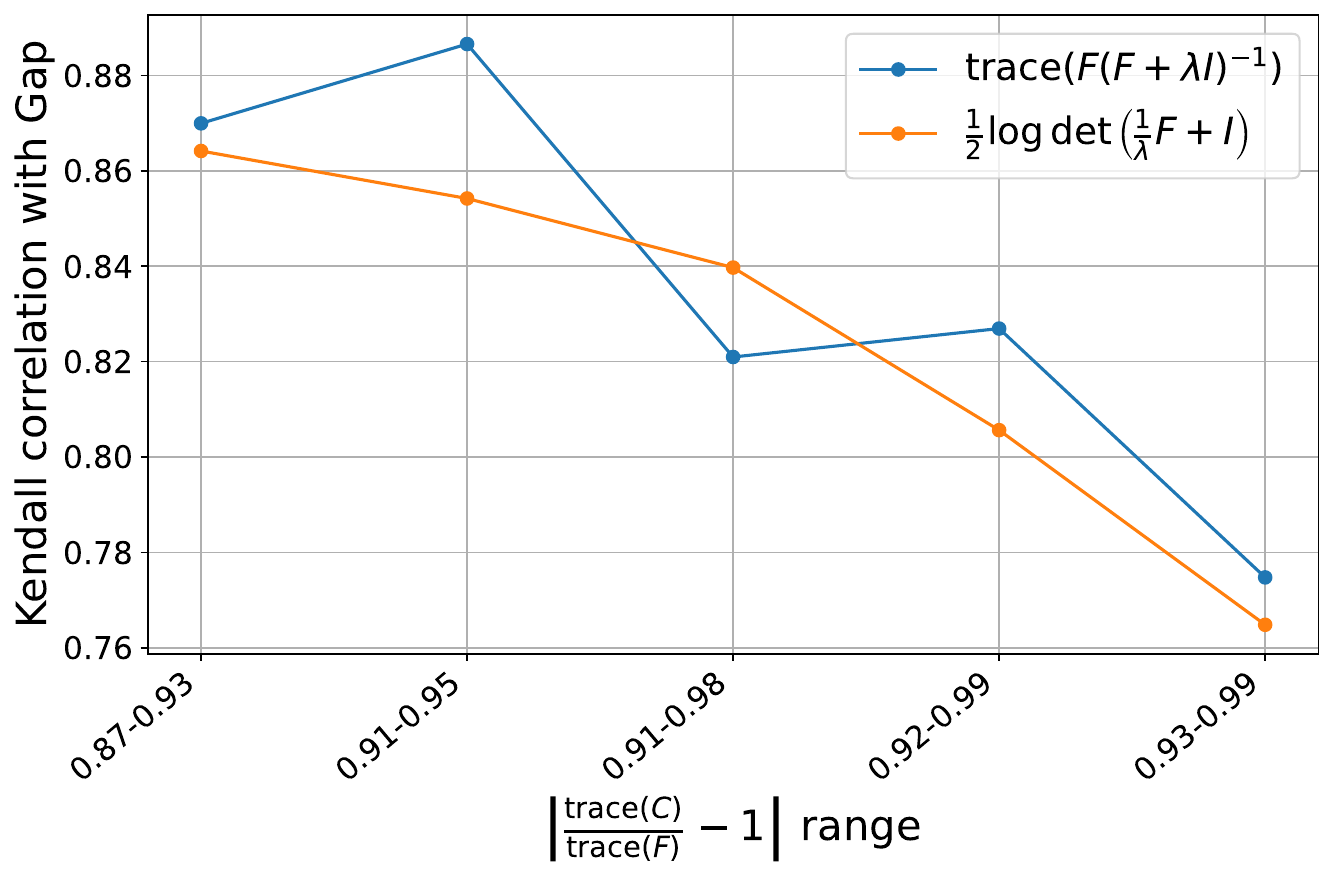}
    \hfill
    \includegraphics[width=0.48\linewidth]{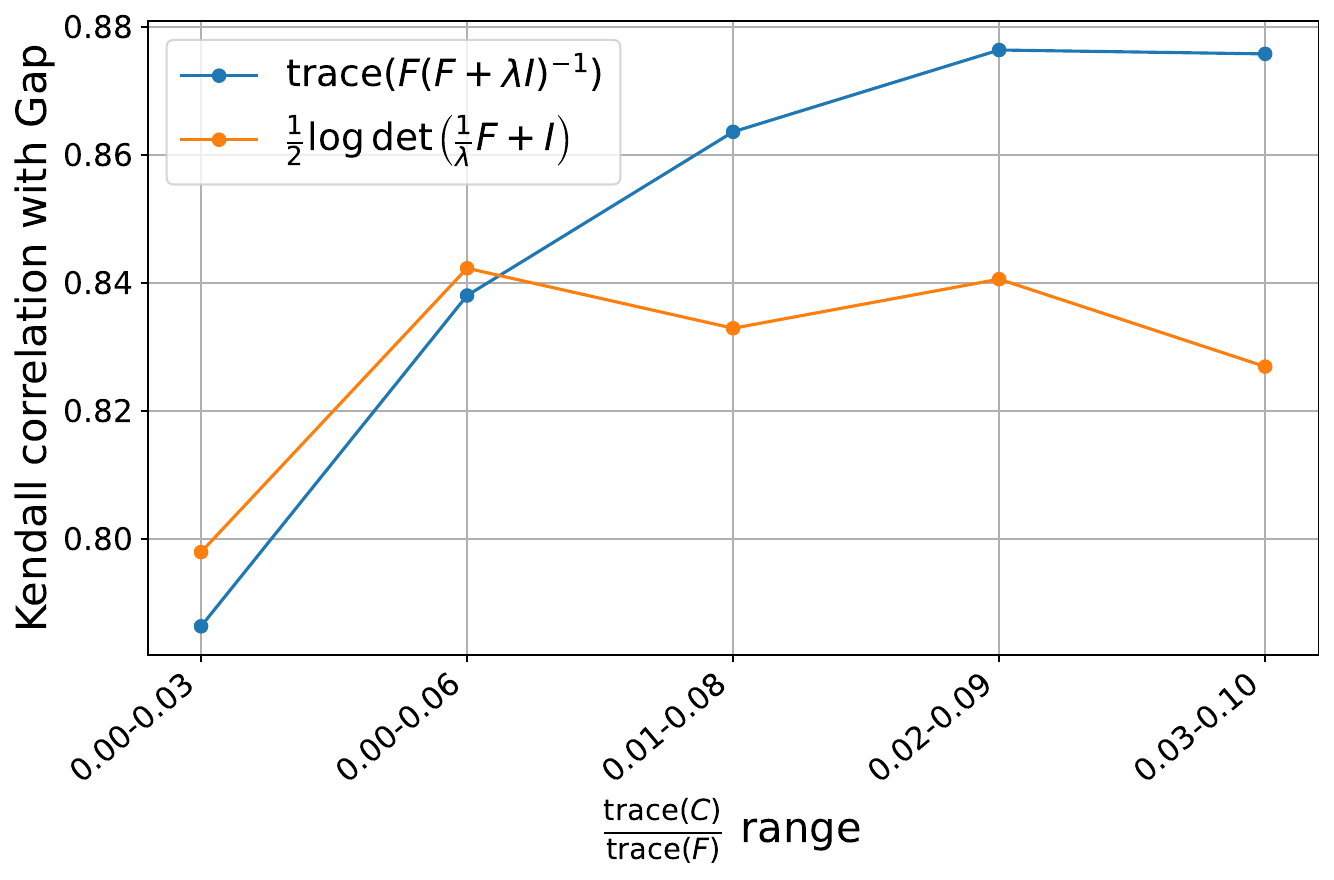}

    \caption{(Left) The ratio is measured on the test set. We see the relationship between model calibration
      (x‐axis: better calibration toward the left) and performance metrics,
      showing improved performance with better calibration.
      (Right) The ratio is measured on the train set.We see the  impact of overfitting (x‐axis: increased overfitting toward
      the left) on performance, highlighting degradation,
      especially for the soft-rank metric.}
    \label{fig:calibration_and_overfitting}
    \label{fig:ratio_influence}
\end{figure}



\subsection{Experimental Results Regarding The Relation Between C and F}
\label{app:experimental_results_CF}

In Section~\ref{sec:on_The_relation} we mentioned two empirical observations related to the relationship between the covariance and Fisher information matrices:

\begin{enumerate}
    \item The Jacobian $\matJ(\hat{\vtheta})$ exhibits independence from both $\matB(\hat{\theta})$ and $\matSigma_{\hat{\vtheta}}$, manifesting in the following approximations: 
\begin{align*}
   \matC(\hat{\vtheta}) &= \E_\vz\!\left[\matJ_{\hat{\vtheta}}^\T \matB_{\hat{\vtheta}}\matJ_{\hat{\vtheta}}\right] 
   \;\approx\; \E_\vz\!\left[\matJ_{\hat{\vtheta}}^\T \,\E_\vz\!\left[\matB_{\hat{\vtheta}}\right]\matJ_{\hat{\vtheta}}\right], \\
   \matF(\hat{\vtheta}) &= \E_\vz\!\left[\matJ_{\hat{\vtheta}}^\T \matSigma_{\hat{\vtheta}}\matJ_{\hat{\vtheta}}\right] 
   \;\approx\; \E_\vz\!\left[\matJ_{\hat{\vtheta}}^\T \,\E_\vz\!\left[\matSigma_{\hat{\vtheta}}\right]\matJ_{\hat{\vtheta}}\right],
\end{align*}
    
    \item The matrices $\E\left[\matB_{\hat{\vtheta}}\right]$ and $\E\left[\matSigma_{\hat{\vtheta}}\right]$ tend to be proportional.
\end{enumerate}
We now present the experimental results that led to these observations.

\begin{figure}[t]
    \centering
    \begin{minipage}{0.4\columnwidth}
        \includegraphics[width=\columnwidth]{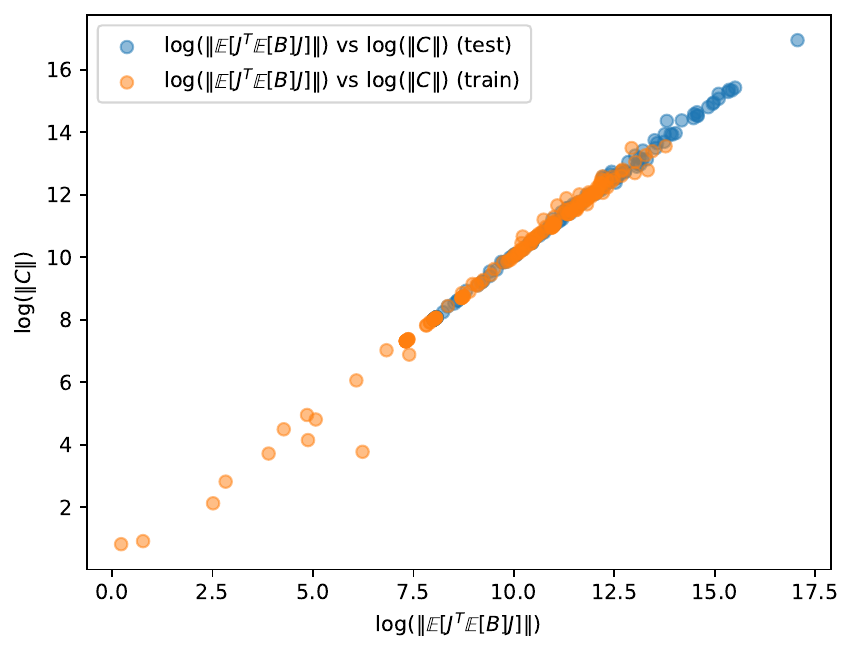}
    \end{minipage}
    \hfill
    \begin{minipage}{0.4\columnwidth}
        \includegraphics[width=\columnwidth]{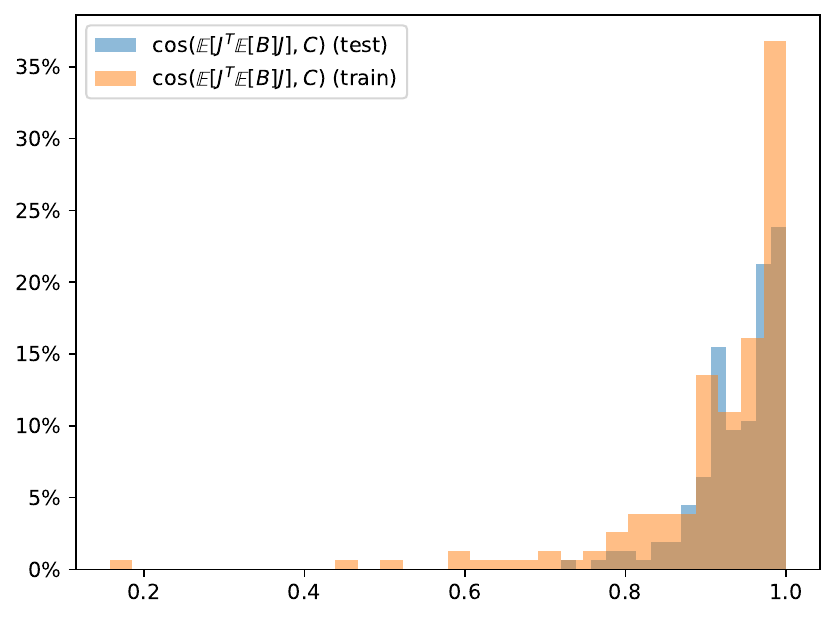}
    \end{minipage}
    \caption{Left: Comparison of matrix log norms showing strong correlation between $\matC(\hat{\vtheta})$ and $\E[\matJ_{\hat{\vtheta}}^\T\E[\matB_{\hat{\vtheta}}]\matJ_{\hat{\vtheta}}]$. Right: Histogram of cosine similarities between the matrices is concentrated near 1, indicating alignment}
    \label{fig:C_comparison}
\end{figure}

\begin{figure}[t]
    \centering
    \begin{minipage}{0.4\columnwidth}
        \includegraphics[width=\columnwidth]{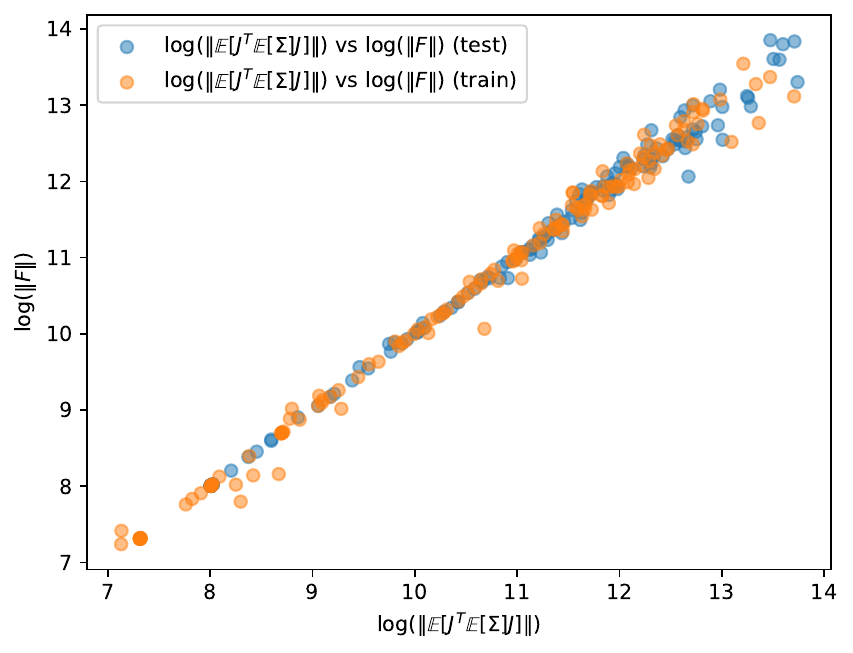}
    \end{minipage}
    \hfill
    \begin{minipage}{0.4\columnwidth}
        \includegraphics[width=\columnwidth]{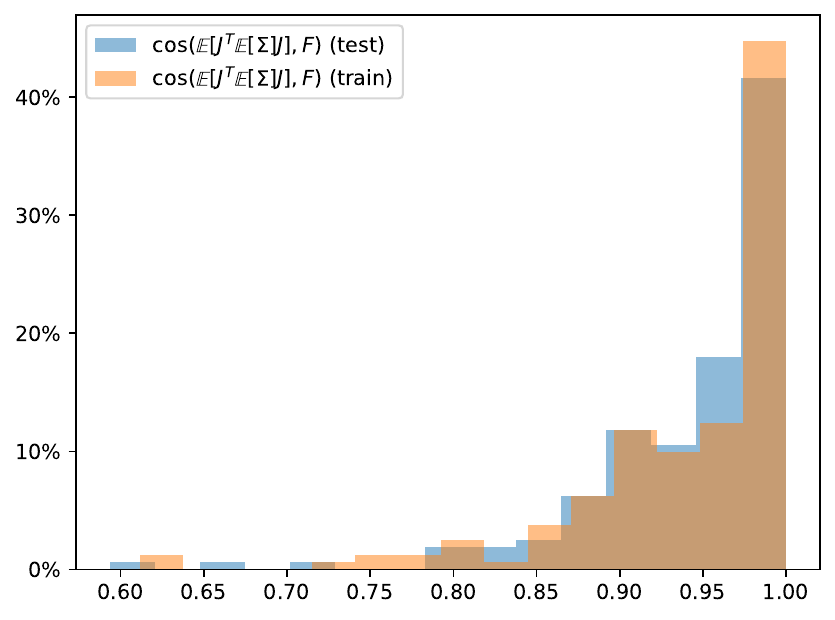}
    \end{minipage}
    \caption{Left: Log norm comparison between $\matF(\hat{\vtheta})$ and $\E[\matJ_{\hat{\vtheta}}^\T\E[\matSigma_{\hat{\vtheta}}]\matJ_{\hat{\vtheta}}]$, showing strong correlation. Right: Histogram of cosine similarities is concentrated near 1, indicating alignment.}
    \label{fig:F_comparison}
\end{figure}

\begin{figure}[t]
    \centering
    \includegraphics[width=0.5\columnwidth]{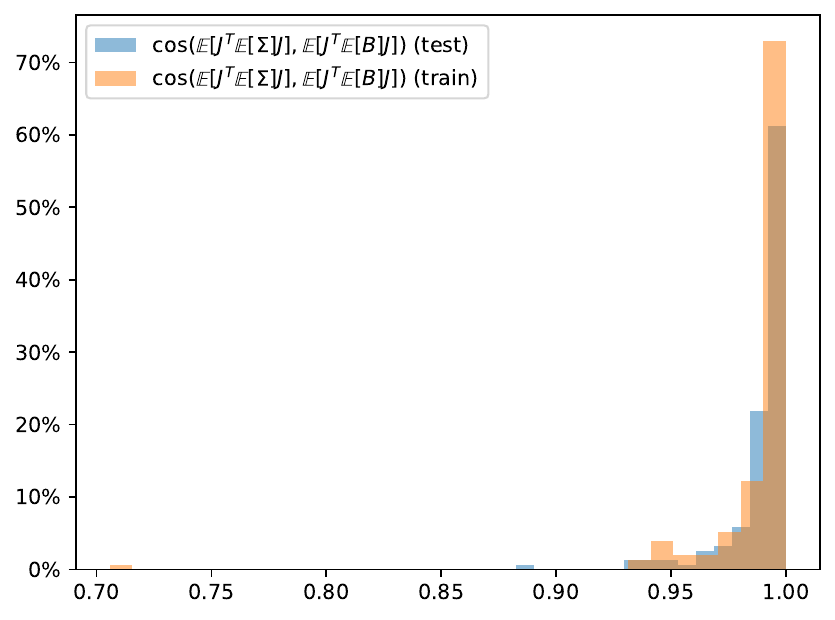}
    \caption{Histogram of cosine similarities between $\E[\matB_{\hat{\vtheta}}]$ and $\E[\matSigma_{\hat{\vtheta}}]$. The strong concentration of values near 1 indicates proportionality.}
    \label{fig:B_Sigma_comparison}
\end{figure}

Figure~\ref{fig:C_comparison} provides evidence for the first part of our independence claim regarding $\matC(\hat{\vtheta})$. The left panel demonstrates that the norms of $\matC(\hat{\vtheta})$ and its approximation $\E[\matJ_{\hat{\vtheta}}^\T\E[\matB_{\hat{\vtheta}}]\matJ_{\hat{\vtheta}}]$ are highly correlated. The right panel further shows that the cosine similarities between these matrices are usually close to 1, indicating that they tend to be proportional. Figure~\ref{fig:F_comparison} demonstrates the same phenomena regarding $\matF(\hat{\vtheta})$ and $\E\left[\matJ_{\hat{\vtheta}}^\T\E\left[\matSigma_{\hat{\vtheta}}\right]\matJ_{\hat{\vtheta}}\right]$.  

Figure~\ref{fig:B_Sigma_comparison} addresses our second claim regarding the proportionality between $\E[\matB_{\hat{\vtheta}}]$ and $\E[\matSigma_{\hat{\vtheta}}]$. The histogram of cosine similarities between these matrices shows a strong concentration near 1.

\section{Conclusion}
\label{sec:conclusion}

In this work, we revisited the connection between the flatness of local minima and generalization. We began by noting that \emph{sharpness}, rather than flatness, may be preferable when the goal is an accurate estimation of the optimal parameter $\vtheta^\star$. This perspective aligns with robust learning, particularly when the training distribution in feature space differs substantially from the test distribution.  

Using classical asymptotic analysis, we showed that, under calibration and practical assumptions, the appropriate measure of flatness in exponential family neural networks is the \emph{soft rank} of the Hessian. Our experiments confirmed that the Hessian soft rank is a far stronger predictor of generalization than the nuclear or operator norms, which are more commonly used in prior work.

Beyond the theoretically interesting case where the soft rank matches the asymptotic gap exactly, we also analyzed the non-calibrated case of softmax classification. Here, we showed that the generalization gap is governed by  
\[
n^{-1}\cdot\frac{\mathrm{MSE}(\vtheta^\star)}{\mathrm{Uncertainty}(\vtheta^\star)}\cdot\rank_\matLambda\!\left(\matF(\vtheta^\star)\right).
\]
Theoretically, we proved that the ratio between error and uncertainty cannot be estimated from the training set (using $\hat{\vtheta}$), as it tends to zero in the overfitting regime. In contrast, the soft rank factor avoids this limitation, and we demonstrated empirically that it can be reliably estimated from training data (via $\hat{\vtheta}$). Assuming  
\[
0 < a \leq \frac{\mathrm{MSE}(\vtheta^\star)}{\mathrm{Uncertainty}(\vtheta^\star)} \leq b < \infty,
\]  
we obtained a practical means of assessing the generalization gap through the soft rank of the Fisher Information Matrix. The accuracy of this assessment depends on the constants $a$ and $b$. Importantly, in the case of where mistakes are made with high confidence at test time, which might be related to overfitting (of $\vtheta^\star$), $b$ diverges, making it impossible to assess the generalization gap via the Hessian.  

From a computational standpoint, we exploited the concavity of the soft rank as a trace function. We proved that the soft rank of a block diagonal submatrix $\tilde{S}$ of a positive semi-definite matrix $S$ bounds the soft rank of $S$. We also showed that this concavity provides an upper bound when approximating the soft rank in practice, such as when using SGD batches. 

We further connected the soft rank to the Bayesian log-determinant complexity measure, noting that both are concave trace functions. This shared property enables similar approximation methods and leads to comparable results in practice.

Finally, our experiments demonstrated that the soft rank (and the related log-determinant) of the Fisher information matrix has a much stronger correlation with generalization than either the trace or the maximal eigenvalue of the Fisher information. These correlations remain robust even under inexpensive approximations such as KFAC and diagonal methods.

\bibliography{flatness}
\bibliographystyle{plainnat}

\newpage
\appendix
\section*{\centering Appendix: Flatness After All?}

\section{Additional Matrix Algebra Lemmas}
Here we collect a few additional matrix algebra lemmas necessary for our proofs.
\begin{lemma}
    \label{lemma:ABA}
    Suppose that $\matA$ is positive definite, and $\matB$ is positive semidefinite. Further assume, $\matB^{\nicefrac{1}{2}}\matA^{-1}\matB^{\nicefrac{1}{2}} \preceq \matI$. Then $\matA^{-1}\matB\matA^{-1}\preceq \matA^{-1}$.
\end{lemma}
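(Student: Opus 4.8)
The plan is to reduce everything to a statement about the ordering $\matB^{\nicefrac12}\matA^{-1}\matB^{\nicefrac12}\preceq\matI$ after conjugating by $\matA^{-\nicefrac12}$. First I would note that since $\matA$ is positive definite, $\matA^{\nicefrac12}$ and $\matA^{-\nicefrac12}$ exist and are positive definite, and the desired conclusion $\matA^{-1}\matB\matA^{-1}\preceq\matA^{-1}$ is equivalent, upon multiplying on both sides by $\matA^{\nicefrac12}$, to
\[
\matA^{-\nicefrac12}\matB\matA^{-\nicefrac12}\preceq\matI.
\]
So it suffices to prove this last inequality. Set $\matN\coloneqq\matA^{-\nicefrac12}\matB^{\nicefrac12}$; then $\matA^{-\nicefrac12}\matB\matA^{-\nicefrac12}=\matN\matN^\T$ and the hypothesis $\matB^{\nicefrac12}\matA^{-1}\matB^{\nicefrac12}\preceq\matI$ reads $\matN^\T\matN\preceq\matI$, i.e. $\Tnorm{\matN}\le1$. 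Since $\matN\matN^\T$ and $\matN^\T\matN$ have the same nonzero eigenvalues, $\Tnorm{\matN\matN^\T}=\Tnorm{\matN}^2=\Tnorm{\matN^\T\matN}\le1$, and as $\matN\matN^\T$ is positive semidefinite this gives exactly $\matN\matN^\T\preceq\matI$, which is the claim.

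The only mild subtlety — and the step I would be most careful about — is the matrix-square-root bookkeeping: making sure that $\matB^{\nicefrac12}$ is well defined (it is, since $\matB$ is positive semidefinite) and that the congruence $\matX\preceq\matY\iff\matM\matX\matM^\T\preceq\matM\matY\matM^\T$ for invertible $\matM$ is applied with $\matM=\matA^{\nicefrac12}$ in the right direction. Everything else is routine: the equivalence of spectral norms of $\matN\matN^\T$ and $\matN^\T\matN$ is standard, and "$\preceq\matI$ for a PSD matrix" is just "largest eigenvalue $\le1$". An alternative, essentially equivalent route avoids introducing $\matN$: conjugate the hypothesis $\matB^{\nicefrac12}\matA^{-1}\matB^{\nicefrac12}\preceq\matI$ by $\matB^{\nicefrac12}$ to get $\matB^{\nicefrac12}\matB^{\nicefrac12}\matA^{-1}\matB^{\nicefrac12}\matB^{\nicefrac12}\preceq\matB$, but this does not directly land on the target and requires an extra $\matA^{-1}$-congruence, so the $\matN\matN^\T$ versus $\matN^\T\matN$ argument is cleaner. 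I would therefore present the proof in the three lines above: (i) reduce to $\matA^{-\nicefrac12}\matB\matA^{-\nicefrac12}\preceq\matI$ by congruence, (ii) rewrite both sides in terms of $\matN=\matA^{-\nicefrac12}\matB^{\nicefrac12}$, (iii) invoke equality of nonzero spectra to pass the hypothesis $\matN^\T\matN\preceq\matI$ to $\matN\matN^\T\preceq\matI$.
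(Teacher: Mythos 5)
Your proof is correct, and it takes a genuinely different route from the paper's. The paper invokes an equivalence from Bernstein's matrix handbook (Fact 8.10.19) to reduce $\matA^{-1}\matB\matA^{-1}\preceq\matA^{-1}$ to $\matA^{-1}\matB\matA^{-1}\matB\matA^{-1}\preceq\matA^{-1}\matB\matA^{-1}$, which by congruence with $\matA$ is equivalent to $\matB\matA^{-1}\matB\preceq\matB$; the latter follows from the hypothesis by congruence with $\matB^{\nicefrac{1}{2}}$. You instead perform a single congruence by $\matA^{\nicefrac{1}{2}}$ to restate the goal as $\matA^{-\nicefrac{1}{2}}\matB\matA^{-\nicefrac{1}{2}}\preceq\matI$ and then pass the hypothesis $\matN^\T\matN\preceq\matI$ to the conclusion $\matN\matN^\T\preceq\matI$ via the standard equality of nonzero spectra of $\matN\matN^\T$ and $\matN^\T\matN$, with $\matN=\matA^{-\nicefrac{1}{2}}\matB^{\nicefrac{1}{2}}$. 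All your steps check out: the congruence is applied in the right direction (congruence by an invertible matrix preserves $\preceq$ in both directions), the identifications $\matN\matN^\T=\matA^{-\nicefrac{1}{2}}\matB\matA^{-\nicefrac{1}{2}}$ and $\matN^\T\matN=\matB^{\nicefrac{1}{2}}\matA^{-1}\matB^{\nicefrac{1}{2}}$ are right, and for a positive semidefinite matrix $\preceq\matI$ is indeed the same as spectral norm at most $1$. What your version buys is self-containedness — it uses only elementary facts and needs no external citation — whereas the paper's version is shorter on the page but leans on a nontrivial quoted equivalence whose own justification is hidden in the reference. Either proof is acceptable; yours is arguably preferable for a reader who wants to verify everything in place.
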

\begin{proof}
    Due to \cite[Fact 8.10.19]{bernstein11}, $\matA^{-1}\matB\matA^{-1}\preceq \matA^{-1}$ if and only if $\matA^{-1}\matB\matA^{-1}\matB\matA^{-1}\preceq \matA^{-1}\matB\matA^{-1}$. This, in turn, holds if and only if $\matB \matA^{-1} \matB \preceq \matB$. The prerequisite that $\matB^{\nicefrac{1}{2}}\matA^{-1}\matB^{\nicefrac{1}{2}} \preceq \matI$ implies that $\matB \matA^{-1} \matB \preceq \matB$, concluding the proof.
\end{proof}
\begin{lemma}
\label{lemma:ABC}
Let $\matA$, $\matB$, and $\matC$ be positive semidefinite matrices with
\[
\matA - \matB \succeq 0.
\]
Then
\[
\trace\left(((\matA - \matB)\matC)^2\right) \le \trace\left((\matA\matC)^2\right).
\]
\end{lemma}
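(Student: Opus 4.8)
The plan is to reduce the claim to a statement about squared Frobenius norms. Write $\matD \coloneqq \matA - \matB$, so that $\matD \succeq 0$ by hypothesis and $\matA = \matD + \matB$ is a sum of two positive semidefinite matrices. Let $\matC^{\nicefrac12}$ denote the positive semidefinite square root of $\matC$. For any symmetric $\matM$, the cyclic invariance of the trace together with $\matC = \matC^{\nicefrac12}\matC^{\nicefrac12}$ gives $\trace\bigl((\matM\matC)^2\bigr) = \trace(\matM\matC\matM\matC) = \trace\bigl((\matC^{\nicefrac12}\matM\matC^{\nicefrac12})^2\bigr) = \FnormS{\matC^{\nicefrac12}\matM\matC^{\nicefrac12}}$, the last equality because $\matC^{\nicefrac12}\matM\matC^{\nicefrac12}$ is symmetric. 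Applying this identity once with $\matM = \matD$ and once with $\matM = \matA$ turns the lemma into the inequality $\FnormS{\matX} \le \FnormS{\matX + \matY}$, where $\matX \coloneqq \matC^{\nicefrac12}\matD\matC^{\nicefrac12}$ and $\matY \coloneqq \matC^{\nicefrac12}\matB\matC^{\nicefrac12}$. Note that $\matX \succeq 0$ since $\matD \succeq 0$, and $\matY \succeq 0$ since $\matB \succeq 0$.

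The second step is an elementary expansion: $\FnormS{\matX + \matY} = \FnormS{\matX} + 2\,\trace(\matX\matY) + \FnormS{\matY}$. The term $\FnormS{\matY}$ is nonnegative, and the cross term is nonnegative because the trace of a product of two positive semidefinite matrices is nonnegative (e.g.\ $\trace(\matX\matY) = \trace(\matY^{\nicefrac12}\matX\matY^{\nicefrac12}) \ge 0$). Hence $\FnormS{\matX + \matY} \ge \FnormS{\matX}$, which is exactly the desired bound after undoing the reduction of the first step.

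There is essentially no hard part here; the only points requiring care are both in the reduction. First, one must use that $\matC^{\nicefrac12}\matM\matC^{\nicefrac12}$ is symmetric so that $\trace$ of its square genuinely equals its squared Frobenius norm, and that the argument is routed through $\matC^{\nicefrac12}$ rather than through a change of variables by $\matC$, since $\matC$ need not be invertible. Second, the hypothesis $\matA - \matB \succeq 0$ is used precisely to ensure $\matX \succeq 0$, hence $\trace(\matX\matY) \ge 0$; without it the cross term could be negative and the inequality could genuinely fail.
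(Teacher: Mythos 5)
Your proof is correct and follows essentially the same route as the paper's: both arguments conjugate by $\matC^{\nicefrac{1}{2}}$ to reduce the claim to comparing $\trace(\matX^2)$ with $\trace((\matX+\matY)^2)$ for the positive semidefinite matrices $\matX=\matC^{\nicefrac{1}{2}}(\matA-\matB)\matC^{\nicefrac{1}{2}}$ and $\matY=\matC^{\nicefrac{1}{2}}\matB\matC^{\nicefrac{1}{2}}$, and both conclude by observing that the difference is a sum of traces of products of positive semidefinite matrices, hence nonnegative. The only cosmetic difference is that you phrase this last step as an expansion of a squared Frobenius norm, while the paper packages it as the factorization $\trace(\matY_2^2)-\trace(\matX_2^2)=\trace((\matY_2-\matX_2)(\matY_2+\matX_2))$; the content is identical.
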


\begin{proof}
Since \( \matA - \matB \preceq \matA \) and \( \matC \succeq 0 \), a congruence transformation by \( \matC^{1/2} \) yields
\[
\matC^{\nicefrac{1}{2}}(\matA - \matB)\matC^{\nicefrac{1}{2}} \preceq \matC^{\nicefrac{1}{2}} \matA \matC^{\nicefrac{1}{2}}.
\]
For two positive definite matrices $\matX$ and $\matY$, $\matX \preceq \matY$ implies that $\trace(\matX^2)\leq\trace(\matY^2)$ (this can be seen from the fact that $\trace(\matY^2) - \trace(\matX^2) = \trace((\matY - \matX)(\matY + \matX))$). Applying to both sides above, we have
$$
\trace\left(\matC^{\nicefrac{1}{2}}(\matA - \matB)\matC(\matA - \matB)\matC^{\nicefrac{1}{2}}\right) \leq \trace(\matC^{\nicefrac{1}{2}} \matA \matC \matA \matC^{\nicefrac{1}{2}}).$$
The claim now follows from the cyclicity of the trace operator.
\end{proof}

\end{document}